\documentclass[preprint,12pt]{elsarticle}

\usepackage{amssymb}
\usepackage{amsmath}
\usepackage{amsthm}
\usepackage{booktabs}
\usepackage{float}
\usepackage{graphicx}
\usepackage{pdfpages}
\usepackage{changes}
\usepackage{multirow}
\usepackage[table]{xcolor}
\usepackage{algorithm}
\usepackage{algorithmic}
\usepackage{url}
\newtheorem{theorem}{Theorem}

\journal{neural network}

\begin{document}

\begin{frontmatter}



\title{TAMTRL: Teacher-Aligned Reward Reshaping for Multi-Turn Reinforcement Learning in Long-Context Compression}

\author[inst1]{Li Wang}
\author[inst1]{Yandong Wang}
\author[inst1]{Xin Yu}
\author[inst1]{Kui Zhang}
\author[inst1]{Tianhao Peng\corref{cor1}}
\author[inst1,inst2,inst3]{Wenjun Wu\corref{cor1}}
\affiliation[inst1]{organization={School of Artificial Intelligence, Beihang University},
            city={Beijing},
            postcode={100191}, 
            country={China}}
\affiliation[inst2]{organization={Hangzhou International Innovation Institute, Beihang University},
            city={Hangzhou},
            country={China}}

\affiliation[inst3]{organization={Beijing Advanced Innovation Center for Future Blockchain and Privacy Computing, Beihang University},
            city={Beijing},
            country={China}}
            
\cortext[cor1]{Corresponding author: Tianhao Peng, email: pengtianhao@buaa.edu.cn, Wenjun Wu, email: wwj09315@buaa.edu.cn}

\begin{abstract}
The rapid progress of large language models (LLMs) has led to remarkable performance gains across a wide range of tasks. However, when handling long documents that exceed the model's context window limit, the entire context cannot be processed in a single pass, making chunk-wise processing necessary. This requires multiple turns to read different chunks and update memory. However, supervision is typically provided only by the final outcome, which makes it difficult to evaluate the quality of memory updates at each turn in the multi-turn training setting. This introduces a temporal credit assignment challenge. Existing approaches, such as LLM-as-a-judge or process reward models, incur substantial computational overhead and suffer from estimation noise. To better address the credit assignment problem in multi-turn memory training, we propose Teacher-Aligned Reward Reshaping for Multi-Turn Reinforcement Learning (TAMTRL). TAMTRL leverages relevant documents as teacher signals by aligning them with each turn of model input and assigns rewards through normalized probabilities in a self-supervised manner. This provides fine-grained learning signals for each memory update and improves long-context processing. Experiments with multiple models of varying scales across seven long-context benchmarks show that TAMTRL consistently outperforms strong baselines, demonstrating its effectiveness. Our code is available at \url{https://anonymous.4open.science/r/TAMTRL-F1F8}.
\end{abstract}



\begin{keyword}
LLMs \sep Reinforcement Learning \sep Long Context \sep Multi Turn \sep Temporal Credit Assignment
\end{keyword}

\end{frontmatter}


\section{Introduction}
Large language models (LLMs) have made strong progress in reasoning~\cite{guo2025deepseek,ke2025survey}, planning~\cite{wei2025plangenllms,song2023llm}, and tool-use~\cite{jin2025search,xue2025simpletir}. Their capabilities have been further improved through reinforcement learning with verification rewards (RLVR). However, many real-world applications, such as web search~\cite{gao2025beyond} and long-document understanding~\cite{chhikara2025mem0}, require LLMs to process extremely long texts, extract relevant information, and update memory over time. Because the context window during pretraining is limited, LLMs cannot process arbitrarily long contexts. This limitation can lead to performance degradation on ultra-long documents~\cite{yu2025memagent} and remains a major challenge for practical long-text applications.

\begin{figure*}[htbp]
    \centering
    \includegraphics[width=1.0\textwidth]{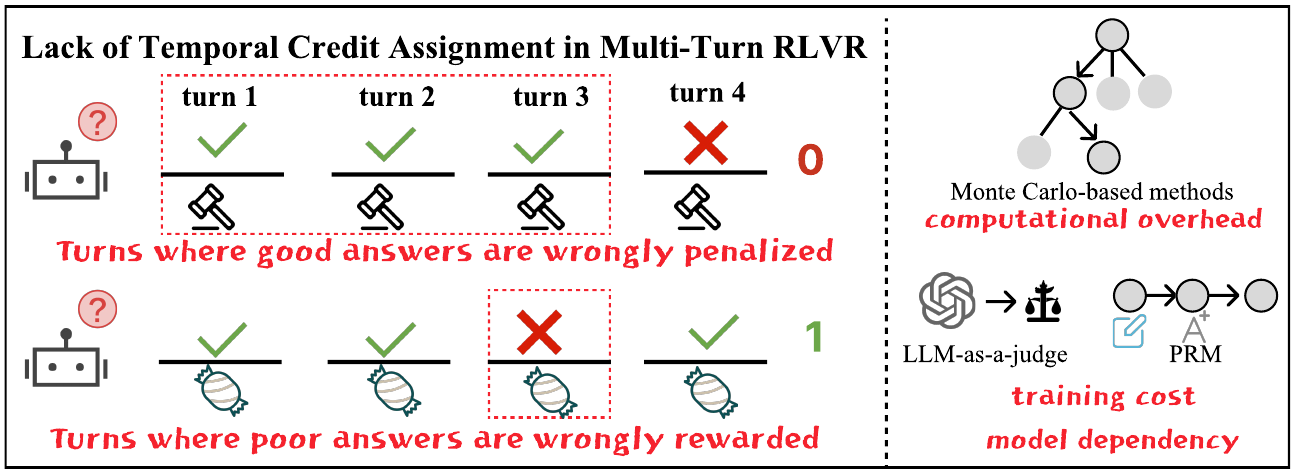}
    \caption{In multi-turn RLVR, the lack of temporal credit assignment may wrongly penalize turns with good answers or reward turns with poor answers, introducing noisy supervision that complicates training and degrades performance. Existing solutions either incur significant computational overhead or rely on external models.}
    \label{fig:motivation}
\end{figure*}

For long-context tasks, MemAgent introduces a chunk-wise processing paradigm with an explicit memory mechanism, and enhances the effective context capacity and long-document understanding of LLMs through multi-turn reinforcement learning. However, in multi-turn RLVR training, relying solely on outcome rewards poses significant challenges, as shown in Figure~\ref{fig:motivation}. Outcome-only supervision makes it difficult to evaluate intermediate memory updates. An incorrect update in one turn may cause the final answer to be wrong and penalize other valid updates. Conversely, a poor intermediate memory update may still receive reward when the final answer happens to be correct. Such uniform credit assignment lacks fine-grained process supervision and can introduce substantial noise into optimization, ultimately harming performance. To mitigate the credit assignment problem in long-horizon tasks, several approaches have been explored. Monte Carlo–based methods~\cite{kim2025astro,guan2025rstar} estimate the value of intermediate steps by sampling trajectories from each branching node, but they incur considerable computational overhead. Alternatively, approaches such as LLM-as-a-judge~\cite{stephan2025calculation,kim2024prometheus} and process reward models (PRMs)~\cite{ma2023let, zhang2025lessons} use external evaluators to assess intermediate contributions. However, they increase training cost and resource requirements. Their effectiveness also depends on the evaluator's capability, and imperfect assessments may introduce additional noise into training.

In long-document tasks, multi-turn processing makes it difficult to assign credit at each turn, which increases training difficulty and may potentially degrade performance. In this work, to address the temporal credit assignment problem in multi-turn reinforcement learning (RL) without introducing substantial additional overhead or external evaluators, we propose \textit{Teacher-Aligned Reward Reshaping for Multi-Turn Reinforcement Learning} (TAMTRL). We formalize the multi-turn long-context processing problem as a partially observable Markov decision process (POMDP). Inspired by the centralized training with decentralized execution (CTDE) paradigm in RL~\cite{lowe2017multi,yu2022surprising}, TAMTRL employs a teacher model with a more global perspective to provide centralized supervision during training. Specifically, a turn-level reward is assigned to the student model by scoring its responses using teacher-derived probabilities. The student model is then trained with multi-turn RL using these turn-level rewards, which enables more fine-grained credit assignment for evaluating memory updates at each turn and improves long-context processing and information extraction. Our method uses the model itself as the teacher, leveraging CTDE-style training to enhance long-context capabilities while avoiding reliance on external models. During credit assignment, the probabilities are obtained with only a single forward pass, without requiring relatively expensive autoregressive rollouts, which keeps the computational overhead low. By training with annotations that provide a more global perspective on the training documents, our method improves the model’s generalization capability, enabling it to identify key information from unannotated documents at test time and achieve better performance. Extensive experiments on Qwen3-0.6B and Qwen3-1.7B across seven long-context benchmarks demonstrate the effectiveness of TAMTRL. Our main contributions are summarized as follows:
\begin{itemize}
\item We formalize sequential long-document processing as a partially observable Markov decision process (POMDP) and develop a CTDE-style training framework. During training, a teacher with access to global context provides centralized supervision. At execution time, the student updates memory based only on local observations.

\item We propose \textit{Teacher-Aligned Reward Reshaping for Multi-Turn Reinforcement Learning} (TAMTRL), which employs a model with a more global perspective as a teacher to assign turn-level credit assignment to a student model operating under partial observability. By leveraging teacher-derived probabilistic scoring, TAMTRL enables fine-grained credit assignment for multi-turn RL without relying on external evaluators and incurs minimal additional overhead.

\item We decompose the optimization objective of TAMTRL and analyze its rationale from a theoretical perspective.

\item Extensive experiments across multiple models of varying scales on seven benchmarks demonstrate that TAMTRL consistently outperforms strong baselines. The results also validate its effectiveness on long-context tasks. We further conducted exploratory and analytical experiments, systematically studying the effects of different reward designs, chunk sizes, information density, and training data on the performance of TAMTRL.
\end{itemize}

\section{Related work}
\subsection{Reinforcement Learning for LLMs}
Reinforcement learning has proven effective in enhancing the capabilities of LLMs. The successes of DeepSeek-R1~\cite{guo2025deepseek} and Kimi K1.5~\cite{team2025kimi}, among others, have demonstrated that RLVR training is a valuable approach for improving model performance through environmental interactions. The introduction of algorithms such as GRPO~\cite{shao2024deepseekmath}, PPO~\cite{schulman2017proximal}, Reinforce++~\cite{hu2025reinforce++}, and DAPO~\cite{yu2025dapo} has further amplified the effectiveness of RL. In applications such as search~\cite{jin2025search} and code-related tasks~\cite{xue2025simpletir,magister2023teaching}, many studies~\cite{dong2025agentic,li2025torl} have leveraged RLVR to enhance the model’s ability to interact with tools, expanding its capabilities~\cite{lin2025understanding} and demonstrating the potential of RL.

\subsection{Long Context Compression}
Long-context handling is a crucial adaptation for LLMs in real-world applications. A variety of studies have explored enhancing long-context processing capabilities from multiple perspectives. At the model level, innovations such as Recurrent Neural Networks (RNNs)~\cite{peng2023rwkv}, linear attention mechanisms~\cite{child2019generating,katharopoulos2020transformers}, sparse attention~\cite{beltagy2020longformer,zhao2019explicit}, State Space Models (SSMs)~\cite{gu2021efficiently}, and LongRoPE~\cite{ding2024longrope} have been proposed to improve the model’s contextual capacity at the architectural level. Additionally, some research has introduced memory mechanisms~\cite{hu2025memory,chhikara2025mem0} to further enhance context handling. These include independent storage structures and memory retrieval mechanisms~\cite{zhong2023memorybank,chen2024meminsight,fang2025memp} designed for processing long contexts, as well as iterative memory updates that maintain a fixed-length memory for handling extended contexts~\cite{yu2025memagent}. Moreover, RLVR training~\cite{zhang2025memory,wang2026infmem} has been employed to improve the model’s ability to extract and update information. Furthermore, multi-agent collaboration~\cite{li2024chain} has been proposed to further refine memory management. Despite these advances, existing memory mechanisms in RL-based multi-turn training typically rely solely on outcome rewards~\cite{yu2025memagent,wang2025mem,yuan2025memsearcher}, lacking supervision over the process, which may lead to performance degradation. In contrast, TAMTRL introduces a turn-level credit assignment approach that does not rely on external models, improving training effectiveness.

\subsection{Temporal Credit Assignment in Reinforcement Learning}
The temporal credit assignment problem~\cite{pignatelli2023survey} in long-horizon interactions hampers RL efficiency, and several studies have proposed different approaches to address this challenge. Some methods heuristically assign credit based on time~\cite{sutton1988learning,sutton2016emphatic} or auxiliary goals~\cite{sutton2011horde}, while others focus on reassigning or re-evaluating credit in hindsight~\cite{andrychowicz2017hindsight,schmidhuber2019reinforcement}. Additionally, several approaches leverage natural language processing (NLP) techniques for sequence modeling~\cite{janner2021offline,chen2021decision}, where credit is assigned based on predicted probabilities. In the domain of LLMs, some methods have been developed to address the temporal credit assignment problem. Model-based approaches evaluate the contribution of intermediate steps using techniques such as critics~\cite{schulman2017proximal}, process reward models (PRM)~\cite{ma2023let, zhang2025lessons}, or LLM-judges~\cite{li2024llms,kim2024prometheus}. On the other hand, Monte Carlo methods~\cite{kim2025astro,guan2025rstar} assess the contribution of intermediate steps by performing Monte Carlo simulations at each intermediate node. Counterfactual methods~\cite{bo2024reflective} attribute the contribution to intermediate nodes to evaluate their effect. However, these methods typically involve high complexity or rely on external models, leading to increased training resource overhead. In contrast, our approach leverages the model's own local-global perspective as a teacher model, avoiding reliance on external models and maintaining relatively low overhead.

\section{Preliminary: DAPO}
In this section, we introduce Decoupled Clip and Dynamic sAmpling Policy Optimization (DAPO)~\cite{yu2025dapo}. We begin by reviewing the Group Relative Policy Optimization (GRPO)~\cite{shao2024deepseekmath} algorithm. Given a dataset \(D\) consisting of questions \(q\) and their corresponding ground-truth answers \(a\), GRPO generates rollout trajectories \(\{o_1, \dots, o_G\}\) using the previous policy \(\pi_{\theta_{\text{old}}}\) and updates the current policy \(\pi_\theta\) by maximizing the following objective:
\begin{align}
\mathcal{J}_{\text{GRPO}}(\theta) &= \mathbb{E}_{q \sim P(Q), \{o_i\} \sim \pi_{\theta}} \Bigg[ \frac{1}{G} \sum_{i=1}^G \frac{1}{|o_i|} \sum_{t=1}^{|o_i|} 
\Big( s_{i,t} - \beta \, \mathbb{D}_{\text{KL}} \left[ \pi_\theta \,\|\, \pi_{\text{ref}} \right] \Big) \Bigg], \nonumber \\
s_{i,t} &= \min \Big( \rho_{i,t} \, A_{i,t}, \, \text{clip}(\rho_{i,t}, 1 - \epsilon, 1 + \epsilon) \, A_{i,t} \Big), \nonumber \\
\rho_{i,t} &= \frac{\pi_{\theta}(o_{i,t} \mid q, o_{i,<t})}{\pi_{\theta_{\text{old}}}(o_{i,t} \mid q, o_{i,<t})},
\label{equ:grpo_loss}
\end{align}
where \(\epsilon\) and \(\beta\) are hyperparameters controlling the clipping range of the importance sampling ratio and the strength of the KL penalty, respectively. The advantage term \(A_{i,t}\) is defined as:
\begin{equation}
A_{i,t} = \frac{r_i - \mathrm{mean}(\{r_1, \dots, r_G\})}{\mathrm{std}(\{r_1, \dots, r_G\})},
\end{equation}
where \(r_i\) is the reward for trajectory \(o_i\), computed via a rule-based verification procedure. DAPO introduces several improvements over GRPO to enhance training stability and efficiency. It employs the \emph{Clip-Higher} strategy to decouple upper and lower clipping thresholds, a \emph{dynamic sampling} scheme to filter extreme trajectories and oversample informative ones, a \emph{token-level policy gradient loss} to ensure each token contributes equally, and a \emph{soft over-length penalty} to gradually penalize excessively long responses. The resulting training objective is
\begin{align}
\mathcal{J}_{\text{DAPO}}(\theta) =
\mathbb{E}_{\substack{(q,a)\sim\mathcal{D},\\\{o_i\}_{i=1}^G\sim\pi_{\theta_{\text{old}}}(\cdot|q)}}
\Bigg[
\frac{1}{\sum_{i=1}^G |o_i|}
\sum_{i=1}^G \sum_{t=1}^{|o_i|}
\min\Big(
r_{i,t}(\theta)\hat{A}_{i,t}, \nonumber \\
\mathrm{clip}\big(r_{i,t}(\theta),1-\varepsilon_{\text{low}},1+\varepsilon_{\text{high}}\big)\hat{A}_{i,t}
\Big)
\Bigg], \nonumber \\
\text{s.t.} \quad
0 < \left|\{o_i \mid \text{is\_equivalent}(a, o_i)\}\right| < G,
\label{equ:dapo}
\end{align}
where \(\varepsilon_{\text{low}}\) and \(\varepsilon_{\text{high}}\) denote the lower and upper clipping bounds, respectively, and \(\hat{A}_{i,t}\) is the advantage estimate at token \(t\) in trajectory \(o_i\).

\section{Method}
In this section, we introduce the proposed Teacher-Aligned Reward Reshaping for Multi-Turn Reinforcement Learning (TAMTRL) method, as shown in Figure \ref{fig:method}. We first present the overall workflow in Section~\ref{sebsec:workflow} and formalize the problem in Section~\ref{sebsec:formulation}. Next, in Section~\ref{sebsec:tarr}, we explain Teacher-Aligned Reward Reshaping and compute the normalized teacher probability score, $\hat{p}_{ij}$. Finally, we apply the obtained $\hat{p}_{ij}$ for multi-turn RL in Section~\ref{sebsec:multi_turn_rl}. The following sections provide a detailed description of each part.
\begin{figure*}[htbp]
    \centering
    \includegraphics[width=1.0\textwidth]{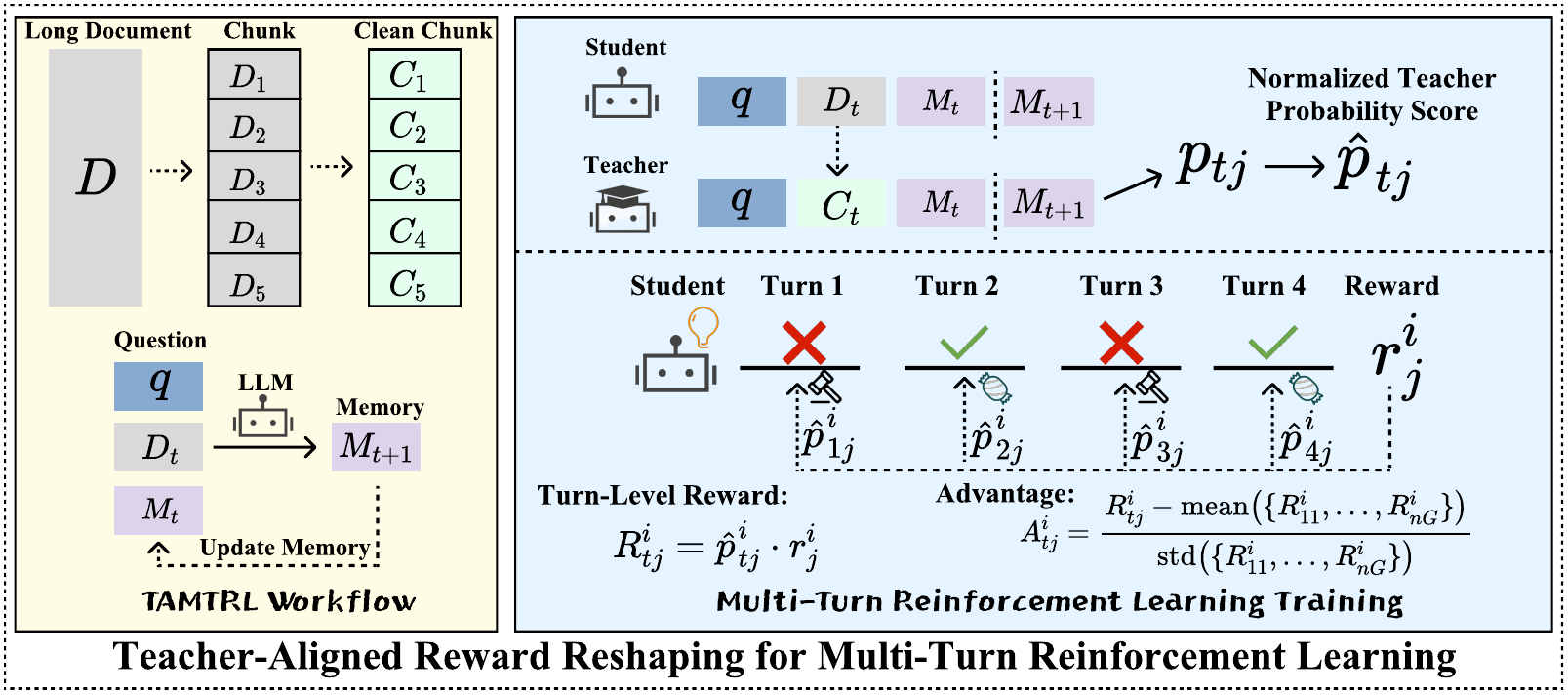}
    \caption{Framework of the TAMTRL Method. Solving long-context problems through chunk-based processing, utilizing the probability scores from a teacher with local and global views for turn-level credit assignment to enable multi-turn reinforcement learning.}
    \label{fig:method}
\end{figure*}

\subsection{TAMTRL Workflow}
\label{sebsec:workflow}
We adopt a processing pipeline analogous to that of MemAgent~\cite{yu2025memagent}. Given an input query $q$ and a long document $D$, the document is first segmented into $n$ chunks, $D_1, \dots, D_n$, according to a predefined context length $l_c$. At each time step $t$, the model maintains a natural language memory $M_t$ with a maximum length of $l_m$, initialized as an empty memory $M_0$. The model then receives $[q, D_t, M_t]$ as input (where $[.,.]$ denotes concatenation) and updates the memory to produce a new memory $M_{t+1}$ based on the information contained in the current document chunk $D_t$. This linear, stepwise processing continues until the entire document has been consumed, ultimately yielding the pair $[q, M_n]$ from which the final answer $y$ is generated. The entire process can be formulated as a POMDP, in which at each step the model only has access to the local document chunk and must decide which information to retain, ultimately producing the updated memory $M_t$, as illustrated in Section~\ref{sebsec:formulation}, thereby increasing the challenge of training.
 
\subsection{Problem Formulation}
\label{sebsec:formulation}
We formalize the sequential long-document processing problem as a Partially Observable Markov Decision Process (POMDP), defined by the tuple $\mathcal{M} = (\mathcal{S}, \mathcal{A}, \mathcal{T}, \Omega, \mathcal{O}, \mathcal{R}, \gamma)$. A document $\mathcal{D}$ is partitioned into $n$ sequential chunks $\{D_1, \dots, D_n\}$, which the LLM processes iteratively while maintaining a persistent memory $M_t$. The state at time $t$ is defined as $s_t = (q, \mathcal{D}, M_t) \in \mathcal{S}$, encapsulating the static global document and the LLM’s internal memory from the preceding step. Due to context-length constraints, the LLM cannot directly observe the full state $s_t$; instead, it receives a local observation $o_t = (q, D_t, M_t) \in \Omega$ through the observation function $\mathcal{O}(o_t \mid s_t)$. The action space $\mathcal{A}$ corresponds to the entire text space $\mathcal{V}^*$, where the action $a_t \in \mathcal{A}$ at each step represents the generated memory output $M_{t+1}$ intended for the next step. Consequently, the state transition $\mathcal{T}(s_{t+1} \mid s_t, a_t)$ is driven by the update of the memory component from $M_t$ to $M_{t+1}$. The reward function $\mathcal{R}$ is defined as a consistent signal: for all steps $t \leq n$, the reward $r_t = 1$ if the LLM correctly answers the query $q$ based on the memory $M_t$, and $0$ otherwise. The objective is to learn an optimal policy $\pi(a_t \mid o_t)$ that maximizes the expected cumulative discounted return $J(\pi) = \mathbb{E}_{\pi} [ \sum_{t=1}^n \gamma^{t-1} r_t ]$, which necessitates that the LLM strategically encodes and preserves task-relevant information within the memory across the sequential processing bottleneck.

\subsection{Teacher-Aligned Reward Reshaping}
\label{sebsec:tarr}
In conventional RLVR, supervision is provided solely through the final reward, which often fails to accurately attribute the contribution of each step in multi-turn scenarios, further exacerbating the difficulty of training. To address this, we propose Teacher-Aligned Reward Reshaping for Multi-Turn Reinforcement Learning (TAMTRL) for multi-turn credit assignment. At each turn $t$, given a document chunk $D_t$, we first remove irrelevant content based on the ground-truth document annotations, yielding a filtered chunk $C_t$ that contains only relevant information. We then concatenate the student model's input query $q$ with the memory $M_t$ and the filtered chunk $C_t$, forming the input $[q, C_t, M_t]$ for the teacher model, which is aligned with the student model. Note that the student and teacher models are the same model $\pi_{\theta}$, with different input contexts. Next, we feed the student model with $[q, D_t, M_t]$ to obtain the updated memory $M_{t+1}$. The teacher-aligned reward for this memory update is computed as the average token-wise probability assigned by the teacher model $\pi_{\theta}$:
\begin{equation}
p_t = \frac{1}{|M_{t+1}|} \sum_{i=1}^{|M_{t+1}|} \pi_{\theta}(m^{(i)}_{t+1} \mid [q, C_t, M_t]),
\end{equation}
where $m^{(i)}_{t+1}$ denotes the $i$-th token in the updated memory $M_{t+1}$ and $|\cdot|$ represents the number of tokens. However, the average probability output by the teacher model may vary significantly in magnitude, and directly using it as a reward can result in values that are too large or too small, leading to unstable training. To stabilize learning, we further normalize the scores. For all responses across the dataset, we split them by turn, obtaining $M^1_{11}, \dots, M^i_{tj}, \dots, M^s_{nG}$, where $M^i_{tj}$ denotes the $i$-th query in the $t$-th turn from the $j$-th response in the group, with a total of $s$ queries, $n$ turns, and each query rollout $n$ times. We perform min-max normalization on the turn-level teacher probabilities as follows:
\begin{equation}
\hat{p}^i_{tj} = \frac{p^i_{tj} - \min\{p^1_{11}, \dots,  p^s_{nG}\}}{\max\{p^1_{11}, \dots, p^s_{nG}\} - \min\{p^1_{11}, \dots, p^s_{nG}\}},
\end{equation}
where $p^i_{tj}$ is the average token-wise probability computed by the teacher for $M^i_{tj}$, and the minimum and maximum are taken over the entire set of responses $\{p^1_{11}, \dots,  p^s_{nG}\}$. This produces a normalized teacher probability score $\hat{p}^i_{tj}$ for each sample, maintaining relatively stable magnitudes, which is then used for reward shaping.

\subsection{Multi-Turn Reinforcement Learning Training}
\label{sebsec:multi_turn_rl}
Using the normalized teacher probability scores $\hat{p}^i_{tj}$, we conduct multi-turn RL with explicit turn-level credit assignment. For each query $q^i$, we first compute a final outcome reward $r^i_j$ for $j$-th rollout using Exact Match (EM) against the ground-truth answer $\hat{y}^i_j$, where $r^i_j = 1$ for a correct response and $r^i_j = 0$ otherwise. We then decompose each trajectory into $n$ turns, corresponding to the intermediate memory states $M^i_{1j}, \dots, M^i_{nj}$ and the final response $y^i_j$. Turn-level rewards are assigned by modulating the normalized teacher probability score with the outcome reward, resulting in
\begin{equation}
R^i_{tj} = \hat{p}^i_{tj}\cdot r^i_j,
\end{equation}
where $R^i_{tj}$ denotes the reward at the $t$-th turn of the $j$-th rollout for query $q^i$. This design ensures that the correctness of the final answer governs the overall supervision signal, thereby avoiding conflicts between optimization objectives: incorrect answers yield zero rewards across all turns, whereas correct answers receive turn-specific rewards proportional to $\hat{p}^i_{tj}$. A higher teacher probability indicates stronger alignment between the student’s memory update and the teacher’s local–global perspective, suggesting reduced interference from irrelevant content and thus yielding a larger reward; conversely, lower probabilities reflect weaker alignment and result in smaller rewards. Through this mechanism, credit is distributed across turns, providing finer-grained supervision. For each query, we sample $G$ trajectories and decompose them at the turn level into $nG$ independent samples, which together form a group. The advantages are estimated using
\begin{equation}
A^i_{tj} = \frac{R^i_{tj} - \mathrm{mean}\big(\{R^i_{11}, \dots, R^i_{nG}\}\big)}{\mathrm{std}\big(\{R^i_{11}, \dots, R^i_{nG}\}\big)},
\end{equation}
and the policy is optimized using the DAPO algorithm~\cite{yu2025dapo} according to Eq.~\ref{equ:dapo}. The algorithm pseudocode of TAMTRL is shown in~\ref{appendix:pseudocode}.

\section{Theoretical Analysis}
\label{sec:theory_analysis}
We now turn to a theoretical analysis and interpretation of the TAMTRL optimization objective. In particular, we begin by presenting the following decomposition theorem.

\begin{theorem}[Information-Theoretic Decomposition of the TAMTRL Objective]
\label{thm:tamtrl_decomp}
Consider an optimization step at state $S_t = (q, D_t, M_t)$ within a multi-step reasoning process. Let $\pi_\theta(\cdot \mid S_t)$ denote the policy generating the next memory $M_{t+1}$, and let $r_i \in \{0,1\}$ be the binary indicator of final task success, whose distribution depends on $M_{t+1}$ and the subsequent rollout. Given a teacher log-likelihood score $\hat{p}_{\text{t}} = \log \pi_{\text{teacher}}(M_{t+1} \mid S_t)$ and a reference policy $\pi_{\text{ref}}$, the TAMTRL objective is defined as:
\begin{equation}
    \mathcal{J}(\theta) \;=\; \mathbb{E}_{M_{t+1} \sim \pi_\theta}\bigl[\hat{p}_{\text{t}} \cdot r_i \bigr] \;-\; \beta \, D_{\mathrm{KL}}\bigl[\pi_\theta(\cdot\mid S_t) \;\|\; \pi_{\text{ref}}(\cdot\mid S_t)\bigr]. \nonumber
\end{equation}
This objective $\mathcal{J}(\theta)$ can be exactly decomposed into a weighted sum comprising a success-conditional optimization term, a failure-conditional regularization term, and a memory-reward mutual information term:
\begin{equation}
    \mathcal{J}(\theta) \;=\; P(r_i=1 \mid S_t) \cdot \mathcal{L}_{\text{succ}}(\theta) \;+\; P(r_i=0 \mid S_t) \cdot \mathcal{L}_{\text{fail}}(\theta) \;+\; \beta \, I_{\pi_\theta}(M_{t+1} ; r_i \mid S_t), \nonumber
\end{equation}
where the components are defined as:
\begin{align}
    \mathcal{L}_{\text{succ}}(\theta) &= \mathbb{E}_{\pi_\theta}\bigl[\log \pi_{\text{teacher}} \mid r_i=1\bigr] - \beta \, D_{\mathrm{KL}}\bigl[\pi_\theta(\cdot \mid S_t, r_i=1) \;\|\; \pi_{\text{ref}}(\cdot \mid S_t)\bigr], \nonumber \\
    \mathcal{L}_{\text{fail}}(\theta) &= - \beta \, D_{\mathrm{KL}}\bigl[\pi_\theta(\cdot \mid S_t, r_i=0) \;\|\; \pi_{\text{ref}}(\cdot \mid S_t)\bigr], \nonumber
\end{align}
and $I_{\pi_\theta}(M_{t+1} ; r_i \mid S_t)$ represents the mutual information between the generated memory $M_{t+1}$ and the outcome $r_i$ given state $S_t$.
\end{theorem}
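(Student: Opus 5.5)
The decomposition is an exact identity, so the plan is to rewrite each of the two terms of $\mathcal{J}(\theta)$ by conditioning on the outcome $r_i$, and then to handle the KL term with a chain-rule identity that produces the mutual information.

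\textbf{Setup.} Throughout, fix the state $S_t$ and suppress it from the notation. Consider the joint distribution of $(M_{t+1}, r_i)$ defined by
\[
P_\theta(M, r) = \pi_\theta(M)\, P(r \mid M),
\]
where $P(r\mid M)$ encodes the subsequent rollout and does not depend on the current step's choice beyond $M$. Its marginal is $P(r) = \sum_M \pi_\theta(M) P(r\mid M)$. Define the success- and failure-conditional policies by Bayes' rule,
\[
\pi_\theta(M \mid r) = \frac{\pi_\theta(M) P(r\mid M)}{P(r)} .
\]
These are exactly the distributions $\pi_\theta(\cdot\mid S_t, r_i=1)$ and $\pi_\theta(\cdot\mid S_t, r_i=0)$ that appear in $\mathcal{L}_{\text{succ}}$ and $\mathcal{L}_{\text{fail}}$.

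\textbf{Step 1: the reward term.} By the tower property,
\[
\mathbb{E}_{M\sim\pi_\theta}\bigl[\hat p_{\text{t}}\, r_i\bigr] = \sum_{r\in\{0,1\}} P(r)\, \mathbb{E}\bigl[\hat p_{\text{t}}\, r \mid r_i = r\bigr].
\]
The $r=0$ summand vanishes identically, so the reward term equals
\[
P(r_i=1)\,\mathbb{E}_{\pi_\theta}\bigl[\log \pi_{\text{teacher}} \mid r_i=1\bigr].
\]
This produces the first half of $\mathcal{L}_{\text{succ}}$. It also explains why $\mathcal{L}_{\text{fail}}$ contains only a KL piece: failed trajectories contribute no reward.

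\textbf{Step 2: the KL term.} Expand the $P(r)$-weighted average of the conditional KLs:
\[
\sum_r P(r)\, D_{\mathrm{KL}}\bigl[\pi_\theta(\cdot\mid r)\,\|\,\pi_{\text{ref}}\bigr]
= \mathbb{E}_{(M,r)\sim P_\theta}\Bigl[\log \tfrac{\pi_\theta(M\mid r)}{\pi_{\text{ref}}(M)}\Bigr].
\]
Inside the logarithm, multiply and divide by $\pi_\theta(M)$. The resulting expectation splits into two parts:
\begin{itemize}
\item $\mathbb{E}\bigl[\log \tfrac{\pi_\theta(M)}{\pi_{\text{ref}}(M)}\bigr] = D_{\mathrm{KL}}[\pi_\theta\|\pi_{\text{ref}}]$, because the $M$-marginal of $P_\theta$ is $\pi_\theta$;
\item $\mathbb{E}\bigl[\log \tfrac{\pi_\theta(M\mid r)}{\pi_\theta(M)}\bigr] = I_{\pi_\theta}(M_{t+1}; r_i\mid S_t)$, by the definition of mutual information.
\end{itemize}
Hence
\[
D_{\mathrm{KL}}[\pi_\theta\|\pi_{\text{ref}}] = \sum_r P(r)\, D_{\mathrm{KL}}[\pi_\theta(\cdot\mid r)\|\pi_{\text{ref}}] - I_{\pi_\theta}(M_{t+1}; r_i\mid S_t).
\]

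\textbf{Step 3: assemble.} Multiply the identity from Step 2 by $-\beta$ and add the result of Step 1. Grouping the $r=1$ terms gives $P(r_i=1)\,\mathcal{L}_{\text{succ}}$, grouping the $r=0$ terms gives $P(r_i=0)\,\mathcal{L}_{\text{fail}}$, and the remaining $+\beta I$ term is the mutual information contribution. This is exactly the claimed decomposition.

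\textbf{Main obstacle.} The algebra is routine; the main obstacle is Step 2 together with well-definedness. Specifically:
\begin{itemize}
\item One needs $\pi_\theta \ll \pi_{\text{ref}}$ so that all KL terms are finite.
\item One needs $\pi_\theta(\cdot\mid r) \ll \pi_\theta$, which holds automatically from the Bayes formula.
\item One needs $P(r)>0$ to define the conditionals. In the degenerate case $P(r)=0$, the corresponding summand carries weight $0$ and the mutual information is $0$, so adopting the convention $0\cdot(\cdot)=0$ keeps the identity valid.
\end{itemize}
We will also state explicitly that $P(r\mid M)$ is held fixed, i.e.\ the subsequent rollout's dependence on $\theta$ is treated as part of the environment. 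Under this convention $P_\theta(M,r)$ is a legitimate joint distribution, and the decomposition is an exact identity rather than an approximation.
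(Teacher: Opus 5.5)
Your proof is correct and follows the paper's argument. The reward term is reduced by the same indicator/tower-property observation, and your Step 2 establishes the same identity $D_{\mathrm{KL}}[\pi_\theta\|\pi_{\text{ref}}]=\sum_{r}P(r)\,D_{\mathrm{KL}}[\pi_\theta(\cdot\mid r)\|\pi_{\text{ref}}]-I_{\pi_\theta}(M_{t+1};r_i\mid S_t)$, only via a direct log-ratio split instead of the paper's route (writing the KL as a cross-entropy minus $H_{\pi_\theta}(M_{t+1}\mid S_t)$, then applying $H(X)=H(X\mid Y)+I(X;Y)$), which makes yours marginally more robust since it never needs the entropy to be finite on its own; your caveat about freezing $P(r\mid M)$ is harmless but unnecessary, because the identity holds pointwise at each fixed $\theta$.
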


The proof is provided in~\ref{appendix:proof_of_theorem1}. Theorem~\ref{thm:tamtrl_decomp} provides a rigorous theoretical justification for the superiority of TAMTRL over conventional outcome-only reward mechanisms in long-horizon multi-turn RL. By decomposing the optimization objective into a weighted combination of a success-conditional teacher alignment term, a failure-conditional regularization term, and a mutual information term between the memory update and the final outcome, TAMTRL achieves precise temporal credit assignment. For successful trajectories, $\mathcal{L}_{\rm succ}(\theta)$ encourages the model to closely mimic the teacher's response while adhering to the KL constraint, thereby increasing the probability of generating the teacher's response. In contrast, for failed attempts, the model is not penalized but is instead required to maintain the KL constraint through $\mathcal{L}_{\text{fail}}(\theta)$, preventing the forgetting of previous capabilities. The mutual information term $I_{\pi_\theta}(M_{t+1} ; r_i \mid S_t)$ fosters a feedback loop between the model-generated memory $M$ and the reward increment, helping the model better predict task success or failure and thus providing a more accurate supervision signal. This decomposition provides a clearer understanding of TAMTRL’s optimization objective and the theoretical reasoning behind its design.

\section{Experiment}
\subsection{Experiment Setup}
\paragraph{Training} We construct the training set based on HotpotQA~\cite{yang2018hotpotqa}, following a synthesis procedure similar to RULER~\cite{hsieh2024ruler} and MemAgent~\cite{yu2025memagent}. Specifically, we randomly sample distractor paragraphs and interleave them with relevant paragraphs so that each prompt contains 100 paragraphs in total, requiring multi-round processing. More dataset details are provided in~\ref{appendix:datasets_statics}.

\paragraph{Baselines} To evaluate the effectiveness of TAMTRL, we compare it against the following baselines: (1) CoT Distillation, including SFT~\cite{magister2023teaching} and STaR~\cite{zelikman2024star}, where multiple CoT responses are sampled per question and correct trajectories are selected for fine-tuning; (2) Knowledge Distillation: Vanilla-KD~\cite{muralidharan2024compact}, which relies on online teacher LLM inference; (3) RL-based Methods, including MemAgent~\cite{yu2025memagent} trained with DAPO~\cite{yu2025dapo} for multi-turn RL; and (4) Process Supervision, including LLM-judge\cite{kim2024prometheus} and PRM\cite{ma2023let}, which provide turn-level reward signals using an external LLM evaluator and a turn-level reward model trained on annotated reasoning turns, respectively.

\paragraph{Implementation} We adopt Qwen3~\cite{yang2025qwen3} as the backbone model, including Qwen3-0.6B and Qwen3-1.7B, and disable the thinking mode to accelerate inference. During training, we deliberately restrict the context window to 8K tokens to evaluate extrapolation ability, allocating 1024 tokens for the query, 5000 for the context chunk, 1024 for memory, and 1024 for the output, with the remaining tokens reserved for the chat template. All experiments were implemented using the Verl~\cite{sheng2024hybridflow} and MemAgent~\cite{yu2025memagent} frameworks. The experiments were carried out using Python 3.10 and PyTorch 2.6. More implementation details are provided in~\ref{appendix:implementation_details}.

\paragraph{Evaluation} Following MemAgent~\cite{yu2025memagent}, we conduct a comprehensive evaluation across several long-context benchmarks, covering both in-domain (ID) and out-of-domain (OOD) settings. For ID  evaluation, we use a HotpotQA~\cite{yang2018hotpotqa} dataset constructed with the same protocol as the training data, where each example is augmented with distractor documents. For OOD  evaluation, we use the following benchmarks: (1) RULER-QA~\cite{hsieh2024ruler}: built with a construction procedure similar to the training setup, consisting of HotpotQA examples with distractor documents; we adopt an 8K context length to increase task difficulty. (2) Multi-keys Needle-in-a-Haystack (NIAH)~\cite{kamradt2023needle}: a long-context, low information-density benchmark that requires effective key evidence extraction; we also use an 8K context to further increase difficulty. (3) LongBench-QA: we evaluate on NarrativeQA~\cite{kovcisky2018narrativeqa}, Qasper~\cite{dasigi2021dataset}, 2WikiMultihopQA~\cite{ho2020constructing}, and MuSiQue~\cite{trivedi2022musique}, which feature comparatively shorter contexts but higher information density. More dataset details are provided in~\ref{appendix:datasets_statics}. We set the sampling temperature to 1.0 and top-p to 0.7, and adopt average@4 during evaluation to improve stability. Answer correctness is evaluated using Exact Match (EM). For questions with multiple valid answers, we additionally report a sub\_em score. Under this metric, a prediction is considered fully correct only if it contains all ground-truth elements; otherwise, the score is computed as the fraction of ground-truth items correctly covered by the prediction.

\subsection{Main Result}
We report the experimental results of Qwen3-0.6B and Qwen3-1.7B under different methods in Table~\ref{tab:main-results}. The SFT approach achieves competitive performance but consistently remains inferior to RL-based methods, highlighting the importance of learning through interaction with the environment. Furthermore, MemAgent still underperforms TAMTRL, indicating that the turn-level credit assignment introduced by TAMTRL enables the model to more effectively identify salient information and update its memory, thereby enhancing long-context reasoning capabilities. Although the LLM-judge method also introduces turn-level credit assignment, its performance is slightly worse than MemAgent, possibly due to the relatively limited quality of the judge. In addition to incurring extra computational overhead, the evaluation process may introduce noise that negatively affects learning. While the PRM model achieves comparatively strong results, it still underperforms TAMTRL. Furthermore, obtaining a well-performing PRM requires dedicated data collection and extensive training, and its incorporation during the RL phase introduces additional computational overhead. Overall, TAMTRL achieves the best performance, yielding average relative improvements of 1.87\% and 2.02\% over strong baselines on the 0.6B and 1.7B backbone models, respectively. Although our method leverages annotations that provide a more global perspective during training, it generalizes effectively to unannotated documents at test time, outperforming methods that rely on external evaluators such as LLM-as-a-judge and PRM, while maintaining lower computational overhead, further validating the effectiveness of TAMTRL.

\begin{table}[t]
\label{table:main_results}
\centering
\small
\setlength{\tabcolsep}{3.5pt}
\renewcommand{\arraystretch}{1.2}
\resizebox{\linewidth}{!}{
\begin{tabular}{lcccccccc}
\toprule
\textbf{Methods} & \textbf{HotpotQA} & \textbf{RULER-QA} & \textbf{NIAH} & \textbf{2Wikimqa} & \textbf{Musique} & 
\textbf{Narrativeqa} & \textbf{Qasper} & \textbf{Average} \\
\midrule
\multicolumn{9}{l}{\textbf{\# Qwen3-0.6B}} \\
\midrule
Base~\cite{yang2025qwen3} & 0.00 & 0.65 & 11.65 & 0.63 & 0.12 & 0.00 & 0.50 & 1.94 \\
SFT~\cite{magister2023teaching} & 34.38 & 44.80 & \textbf{97.60} & 37.75 & 18.13 & 4.00 & 10.37 & 35.29 \\
STaR~\cite{zelikman2024star} & 12.11 & 22.15 & 82.70 & 24.50 & 5.12 & 1.88 & 11.00 & 22.78 \\
Vanilla-KD~\cite{muralidharan2024compact} & 27.73 & 38.15 & 89.40 & 35.75 & 18.75 & 2.62 & 11.25 & 31.95 \\
MemAgent~\cite{yu2025memagent} & 38.87 & 47.05 & 95.04 & 40.13 & \underline{23.62} & 4.13 & 12.00 & \underline{37.26} \\
LLM-judge~\cite{kim2024prometheus} & 39.06 & 47.00 & \underline{95.35} & 38.12 & 20.13 & 4.13 & 13.25 & 36.72 \\
PRM~\cite{ma2023let} & \underline{41.02} & \underline{48.75} & 85.20 & \textbf{46.63} & 20.37 & \textbf{4.75} & \textbf{13.75} & 37.21 \\
\rowcolor{yellow!20} \textbf{TAMTRL (ours)} & \textbf{42.09} & \textbf{52.25} & 95.04 & \underline{43.00} & \textbf{24.75} & \underline{4.37} & \underline{13.50} & \textbf{39.29} \\
\midrule
\multicolumn{9}{l}{\textbf{\# Qwen3-1.7B}} \\
\midrule
Base~\cite{yang2025qwen3} & 11.33 & 12.15 & 79.00 & 6.37 & 5.12 & 1.62 & 2.37 & 16.85 \\
SFT~\cite{magister2023teaching} & 40.82 & 51.30 & 96.15 & 48.50 & 24.50 & 4.00 & \textbf{13.75} & 39.86 \\
STaR~\cite{zelikman2024star} & 25.20 & 35.65 & 70.10 & 37.13 & 16.12 & 2.75 & 11.50 & 28.35 \\
Vanilla-KD~\cite{muralidharan2024compact} & 29.49 & 44.40 & 80.15 & 41.50 & 21.00 & 3.75 & 9.25 & 32.79 \\
MemAgent~\cite{yu2025memagent} & \underline{42.97} & 51.60 & 95.30 & 43.75 & \textbf{31.37} & 5.00 & 11.63 & 40.23 \\
LLM-judge~\cite{kim2024prometheus} & 41.21 & 52.05 & 93.30 & 44.87 & 27.13 & \underline{5.25} & 12.38 & 39.46 \\
PRM~\cite{ma2023let} & 42.19 & \underline{54.90} & \underline{97.20} & \underline{49.63} & 26.88 & 4.75 & 12.75 & \underline{41.19} \\
\rowcolor{yellow!20} \textbf{TAMTRL (ours)} & \textbf{47.66} & \textbf{55.45} & \textbf{98.25} & \textbf{53.25} & \underline{29.00} & \textbf{5.50} & \underline{13.38} & \textbf{43.21} \\
\bottomrule
\end{tabular}
}
\caption{Performance (\%) of Qwen3-0.6B and Qwen3-1.7B models across seven representative benchmarks under various methods. The \textbf{bold} and \underline{underline} indicate the best and second-best results, respectively.}
\label{tab:main-results}
\end{table}

\subsection{Training Dynamics Analysis}
We present the training dynamics of TAMTRL on both the 0.6B and 1.7B models in Figures~\ref{fig:training_dynamics_06b} and~\ref{fig:training_dynamics_17b}. As shown in the figures, TAMTRL exhibits consistently stable optimization behavior throughout training. Benefiting from more fine-grained turn-level teacher supervision, the model is able to more effectively identify and retain task-relevant information, thereby improving learning efficiency and ultimately achieving superior performance. Although the LLM-judge approach also introduces turn-level credit assignment, its performance remains close to that of MemAgent and still lags behind TAMTRL. This is likely because the relatively limited capability of the judge introduces additional noise during the evaluation process, which weakens the effectiveness of the credit assignment signal. The PRM approach yields competitive performance, yet it still falls short of TAMTRL overall. In addition, training PRM entails non-trivial overhead, and the reliance on an external model further increases the demand for computational resources. In contrast, TAMTRL does not rely on any external models and achieves the best performance with comparatively lower overhead.

\begin{figure*}[t]
    \centering
    \setlength{\tabcolsep}{2pt}
    \begin{tabular}{ccc}
        \includegraphics[width=0.32\textwidth]{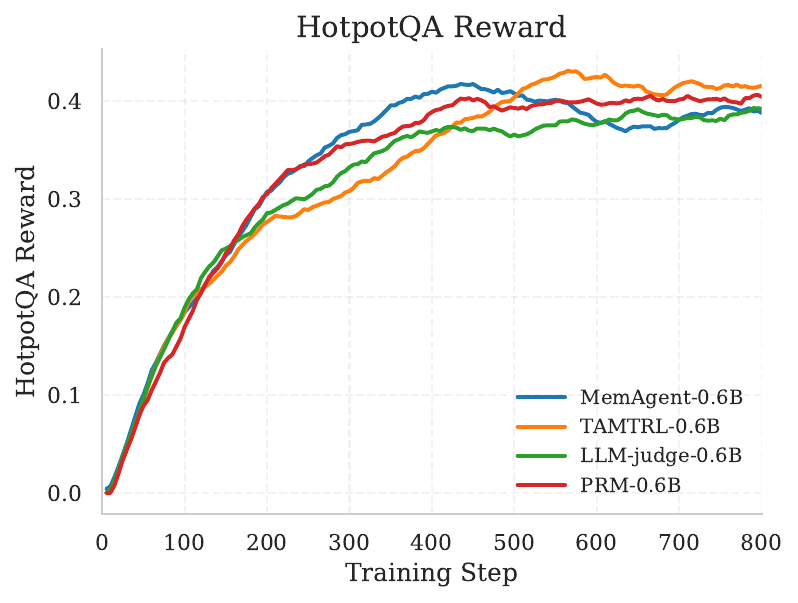} &
        \includegraphics[width=0.32\textwidth]{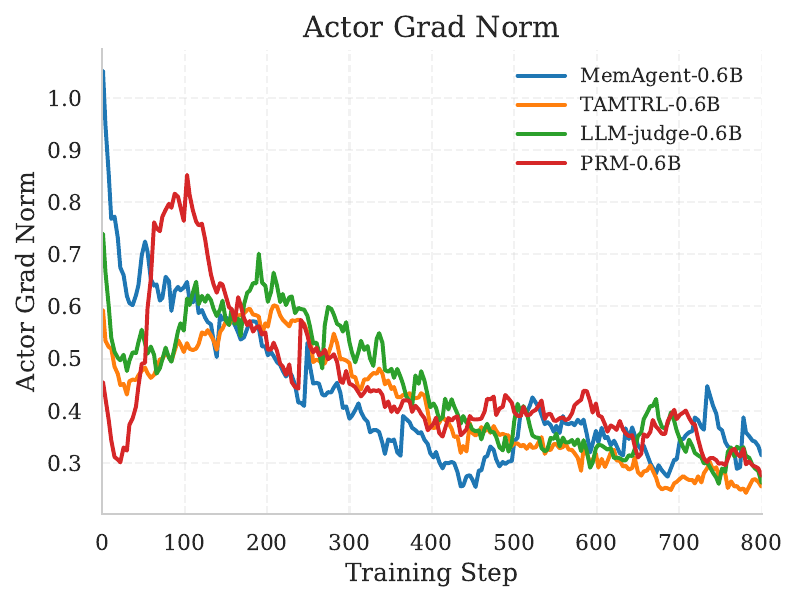} &
        \includegraphics[width=0.32\textwidth]{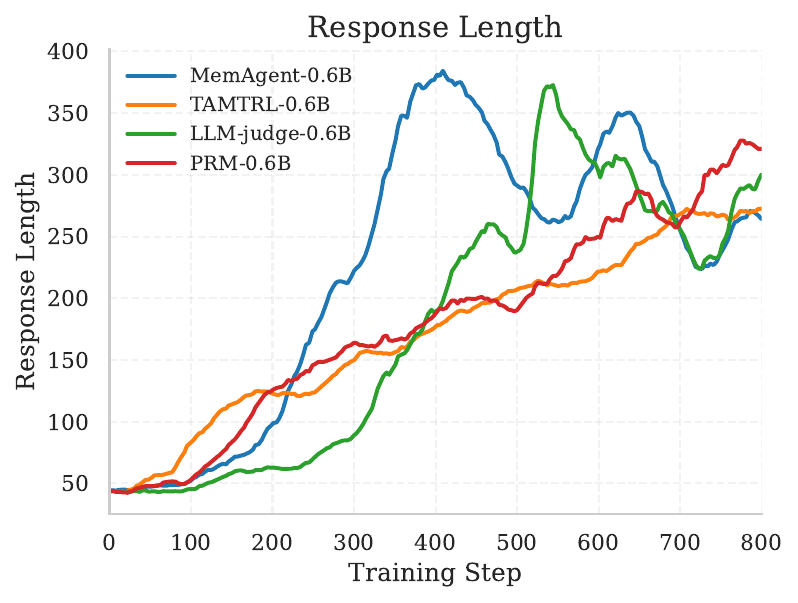} 
    \end{tabular}
    \caption{
    Training dynamics comparison of Qwen3-0.6B model.}
    \label{fig:training_dynamics_06b}
\end{figure*}

\begin{figure*}[t]
    \centering
    \setlength{\tabcolsep}{2pt}
    \begin{tabular}{ccc}
        \includegraphics[width=0.32\textwidth]{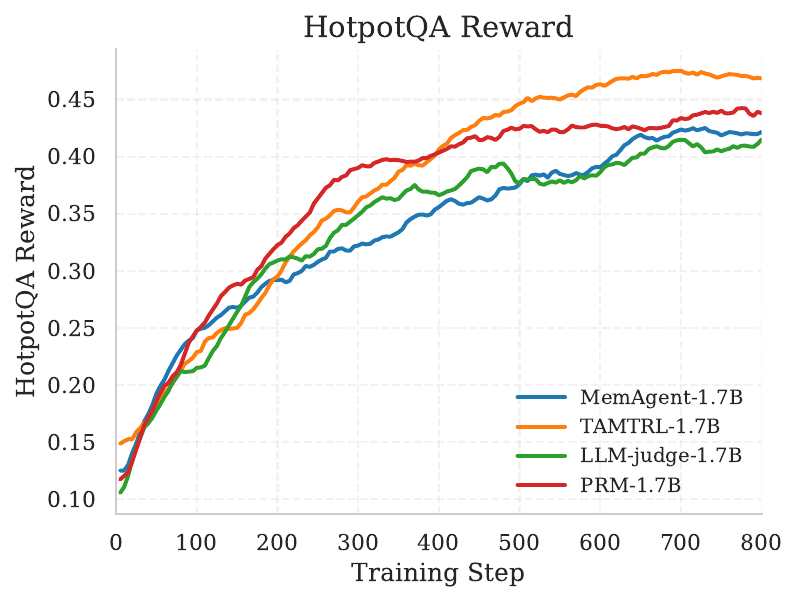} &
        \includegraphics[width=0.32\textwidth]{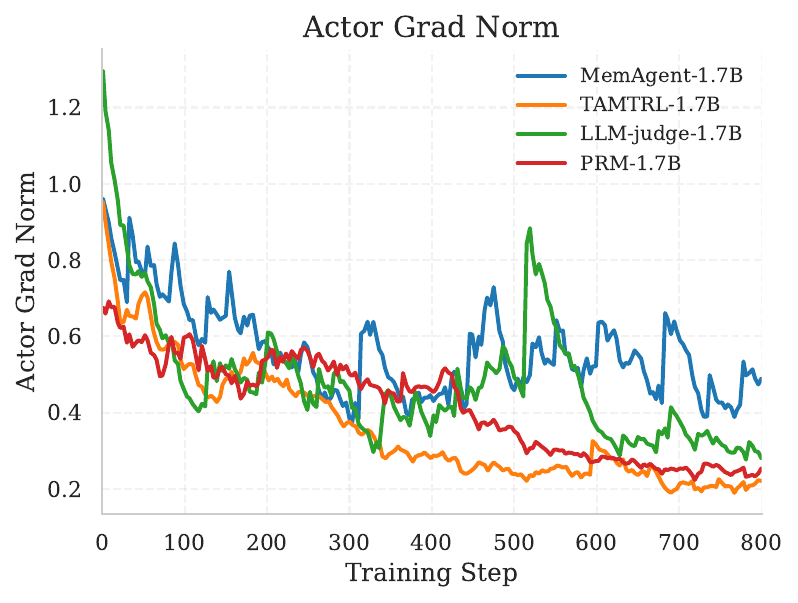} &
        \includegraphics[width=0.32\textwidth]{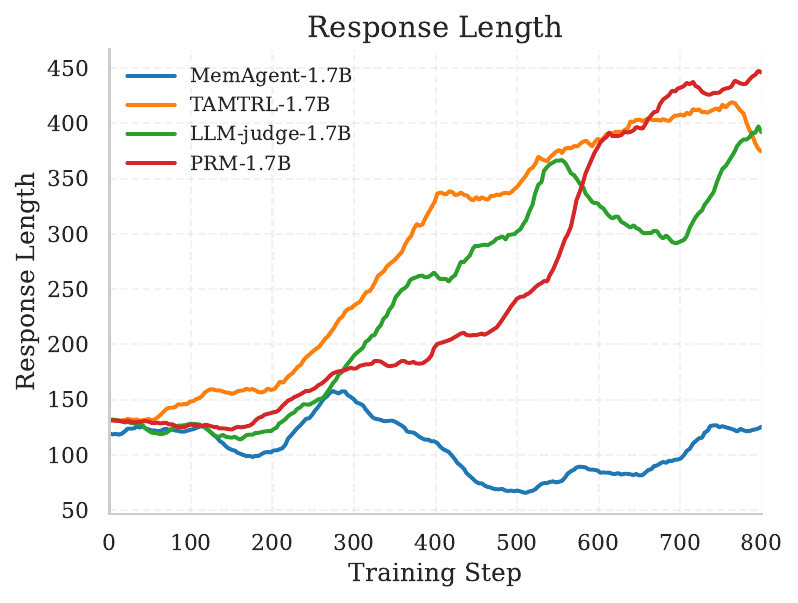} 
    \end{tabular}
    \caption{
    Training dynamics comparison of Qwen3-1.7B model.}
    \label{fig:training_dynamics_17b}
\end{figure*}

\subsection{Ablation experiment}
To evaluate the effectiveness of each module in TAMTRL, we conducted an ablation study, focusing on the role of the length normalization factor $|M_{t+1}|$ in Equation (4) and the max-min normalization in Equation (5). Specifically, we removed the length normalization factor $|M_{t+1}|$ in Equation (4) (denoted as w/o l-norm) and the max-min normalization in Equation (5) (denoted as w/o mm-norm). The experimental results are presented in Table \ref{tab:ablation-study}. The w/o l-norm configuration showed a certain degree of performance decline, likely due to the absence of length normalization. Without this factor, longer sentences tend to accumulate higher probabilities after min-max normalization, making them more likely to receive larger rewards. This may lead to the model developing a preference for longer outputs, thereby introducing optimization noise and negatively impacting performance. On the other hand, w/o mm-norm, which lacked normalization, exhibited smaller probability magnitudes and weaker rewards. After advantage normalization, this likely amplified the noise in the teacher supervision signal, leading to training failure and degraded performance. These results confirm the necessity of each module in TAMTRL.

\begin{table}[t]
\centering
\small
\setlength{\tabcolsep}{3.5pt}
\renewcommand{\arraystretch}{1.2}
\resizebox{\linewidth}{!}{
\begin{tabular}{lcccccccc}
\toprule
\textbf{Methods} & \textbf{HotpotQA} & \textbf{RULER-QA} & \textbf{NIAH} & \textbf{2Wikimqa} & \textbf{Musique} & 
\textbf{Narrativeqa} & \textbf{Qasper} & \textbf{Average} \\
\midrule
\multicolumn{9}{l}{\textbf{\# Qwen3-0.6B}} \\
\midrule
TAMTRL (ours) & \textbf{42.09} & \textbf{52.25} & \underline{95.04} & \textbf{43.00} & \textbf{24.75} & \textbf{4.37} & \textbf{13.50} & \textbf{39.29} \\
w/o l-norm & \underline{39.26} & \underline{49.70} & \textbf{95.15} & \underline{39.50} & \underline{22.25} & \underline{4.00} & \underline{13.12} & \underline{37.57} \\
w/o mm-norm & 0.00 & 0.00 & 0.00 & 0.00 & 0.00 & 0.00 & 0.00 & 0.00 \\
\midrule
\multicolumn{9}{l}{\textbf{\# Qwen3-1.7B}} \\
\midrule
TAMTRL (ours) & \textbf{47.66} & \textbf{55.45} & \textbf{98.25} & \textbf{53.25} & \textbf{29.00} & \textbf{5.50} & \textbf{13.38} & \textbf{43.21} \\
w/o l-norm & \underline{40.04} & \underline{52.55} & \underline{83.45} & \underline{44.87} & \textbf{29.00} & \underline{5.25} & \textbf{13.38} & \underline{38.36} \\
w/o mm-norm & 0.00 & 0.00 & 0.05 & 0.00 & \underline{0.00} & 0.00 & \underline{0.00} & 0.01 \\
\bottomrule
\end{tabular}
}
\caption{Ablation study on the performance (\%) of Qwen3 model across seven representative benchmarks under various methods. The \textbf{bold} and \underline{underline} indicate the best and second-best results, respectively.}
\label{tab:ablation-study}
\end{table}

\subsection{Further Exploration of Reward Designs}
\label{sebsec:reward_design_explore}
We further analyzed the impact of different reward design. Specifically, we explored consolidating all relevant documents into a single window, instead of aligning the context with the student model window-by-window. We then applied turn-level credit assignment using this approach, referred to as global-reward. Additionally, we modified Equation (6) to $R^i_{tj} = \hat{p}^i_{tj} + r^i_{j}$, which resembles the commonly used outcome reward and format reward, denoted as plus-reward. The experimental results are presented in Table \ref{tab:reward-design}. Under the global-reward setting, the model's performance showed a slight decline compared to TAMTRL. This may be due to the misalignment between the teacher's and student's contexts during each turn, meaning that some relevant documents may appear in the teacher's context but not in the student's, thus introducing noise into the credit assignment process and impairing performance. In the plus-reward setup, the performance dropped more significantly, likely because the addition of rewards effectively introduced a dual-objective optimization problem. Furthermore, since the two rewards have different magnitudes, this increased the variance in gradients, thereby complicating the model's optimization and ultimately resulting in poorer performance. These results highlight the effectiveness of the credit assignment design in TAMTRL.

\begin{table}[t]
\centering
\small
\setlength{\tabcolsep}{3.5pt}
\renewcommand{\arraystretch}{1.2}
\resizebox{\linewidth}{!}{
\begin{tabular}{lcccccccc}
\toprule
\textbf{Methods} & \textbf{HotpotQA} & \textbf{RULER-QA} & \textbf{NIAH} & \textbf{2Wikimqa} & \textbf{Musique} & 
\textbf{Narrativeqa} & \textbf{Qasper} & \textbf{Average} \\
\midrule
\multicolumn{9}{l}{\textbf{\# Qwen3-0.6B}} \\
\midrule
TAMTRL (ours) & \textbf{42.09} & \textbf{52.25} & \textbf{95.04} & \textbf{43.00} & \textbf{24.75} & \textbf{4.37} & \textbf{13.50} & \textbf{39.29} \\
global-reward & \underline{42.77} & \underline{51.50} & \underline{89.75} & \underline{41.62} & \underline{22.88} & \underline{3.62} & \underline{14.12} & \underline{38.04} \\
plus-reward & 16.02 & 19.30 & 65.00 & 21.25 & 2.50 & 1.37 & 8.75 & 19.17 \\
\midrule
\multicolumn{9}{l}{\textbf{\# Qwen3-1.7B}} \\
\midrule
TAMTRL (ours) & \textbf{47.66} & \textbf{55.45} & \textbf{98.25} & \textbf{53.25} & \textbf{29.00} & \textbf{5.50} & \textbf{13.38} & \textbf{43.21} \\
global-reward & \underline{46.29} & 51.05 & \underline{93.40} & \underline{51.50} & \underline{28.87} & 4.00 & \underline{14.00} & \underline{41.30} \\
plus-reward & 44.53 & \underline{52.95} & 80.85 & 47.87 & 21.38 & \underline{4.50} & 13.63 & 37.96 \\
\bottomrule
\end{tabular}
}
\caption{Exploration of reward design for Qwen3 model across seven representative benchmarks. The \textbf{bold} and \underline{underline} indicate the best and second-best results, respectively.}
\label{tab:reward-design}
\end{table}

\subsection{Information Density Analysis Experiment}
\subsubsection{Training}
To investigate the impact of varying training document quantities on model performance, we tested the TAMTRL model with different document quantities on the HotpotQA dataset, which features long-context passages, using a fixed test document quantity of 100. The experimental results are presented in Figure \ref{fig:dn_analysis}(left). Overall, as the number of training documents increases, the model must extract key information from a greater number of interfering documents and undergo more processing turns, thereby increasing the training complexity. Consequently, model performance experiences a moderate decline. However, the model still maintains relatively high performance, demonstrating the robustness of TAMTRL to some degree.

\subsubsection{Testing}
To investigate the impact of varying training document quantities on model performance, we evaluated the TAMTRL model on the HotpotQA dataset with different document quantities, while keeping the training document quantity fixed at 100. The experimental results are presented in Figure \ref{fig:dn_analysis}(right). As the number of documents increases, the model is required to extract key information from a greater number of potentially irrelevant documents to answer the questions. With more interaction rounds, the task difficulty increases, leading to a general decline in model performance. Despite the added complexity, the model still manages to retain a relatively good level of accuracy.

\begin{figure*}[t]
    \centering
    \setlength{\tabcolsep}{2pt}
    \begin{tabular}{ccc}
        \includegraphics[width=0.4\textwidth]{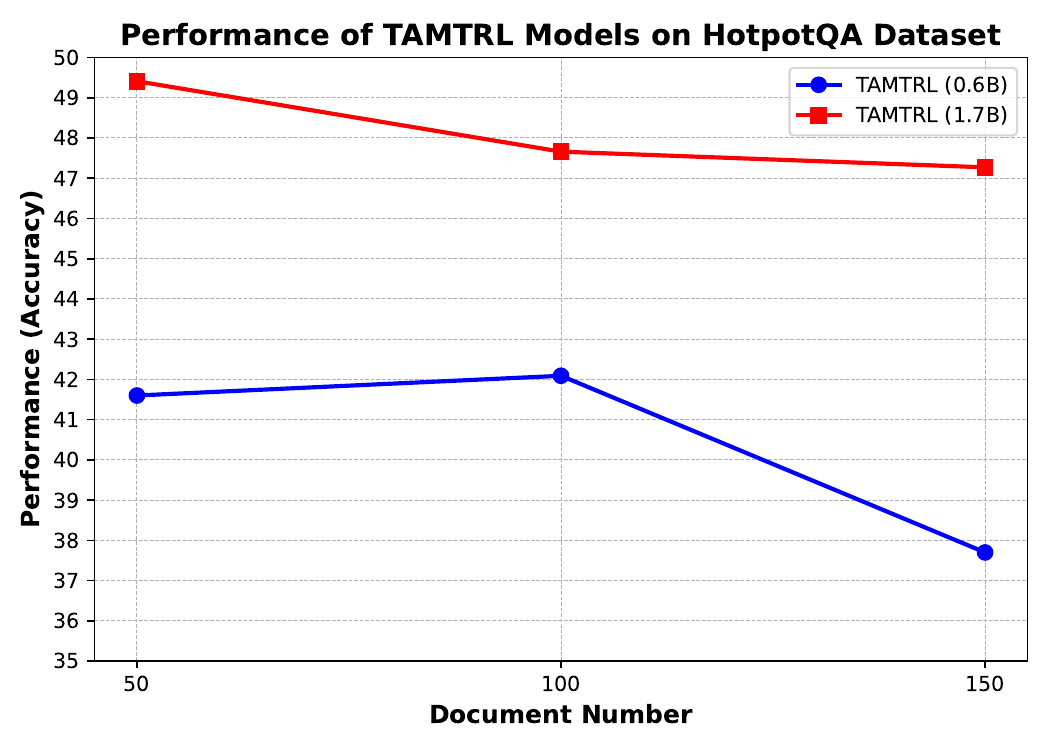} &
        \includegraphics[width=0.4\textwidth]{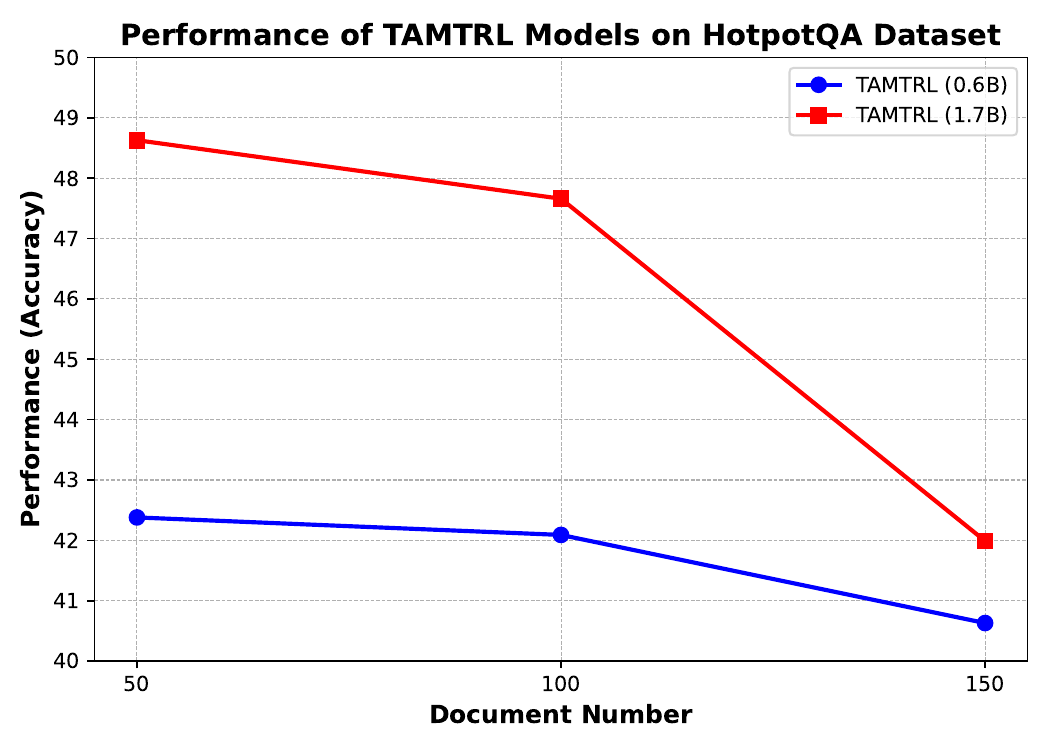}
    \end{tabular}
    \caption{Analysis of the impact of different document quantities on the performance of TAMTRL on the HotpotQA dataset. (Left) Test results with varying training document quantities at a fixed test document quantity of 100; (Right) Test results with varying test document quantities at a fixed training document quantity of 100.}
    \label{fig:dn_analysis}
\end{figure*}

\subsection{Chunk Size Analysis Experiment}
\subsubsection{Training}
To investigate the impact of varying training chunk sizes on model performance, we trained the TAMTRL model on the HotpotQA dataset using different chunk sizes, with a fixed test chunk size of 5000. The experimental results are shown in Figure \ref{fig:cs_analysis}(left). As the chunk size increases, the model is required to process more information in a turn, which increases the processing burden per turn. However, this also reduces the number of turns, mitigating risks such as noise propagation, making it a trade-off between the two factors. For the larger Qwen3-1.7B model, good performance was achieved across various chunk sizes. In contrast, for the smaller Qwen3-0.6B model, the performance showed a moderate decline as the chunk size increased, likely due to its limited ability to handle long-context sequences. This also validates the necessity of multi-turn processing, as single-pass processing of long texts may lead to a performance decline.

\subsubsection{Testing}
To investigate the impact of different test chunk sizes on model performance, we tested the TAMTRL model trained with a fixed chunk size of 5000 on the HotpotQA dataset using different test chunk sizes. The experimental results are shown in Figure \ref{fig:cs_analysis}(right). As the chunk size increases, the number of processing turns required by the model decreases, which may reduce the risk of information loss or noise propagation across multiple turns. However, at the same time, the model needs to process more of the document in a single turn, potentially increasing the cognitive load per turn. Therefore, there is a trade-off, where both excessively large and small chunk sizes may degrade performance. The performance is optimal when the chunk size is appropriately moderate.

\begin{figure*}[t]
    \centering
    \setlength{\tabcolsep}{2pt}
    \begin{tabular}{ccc}
        \includegraphics[width=0.4\textwidth]{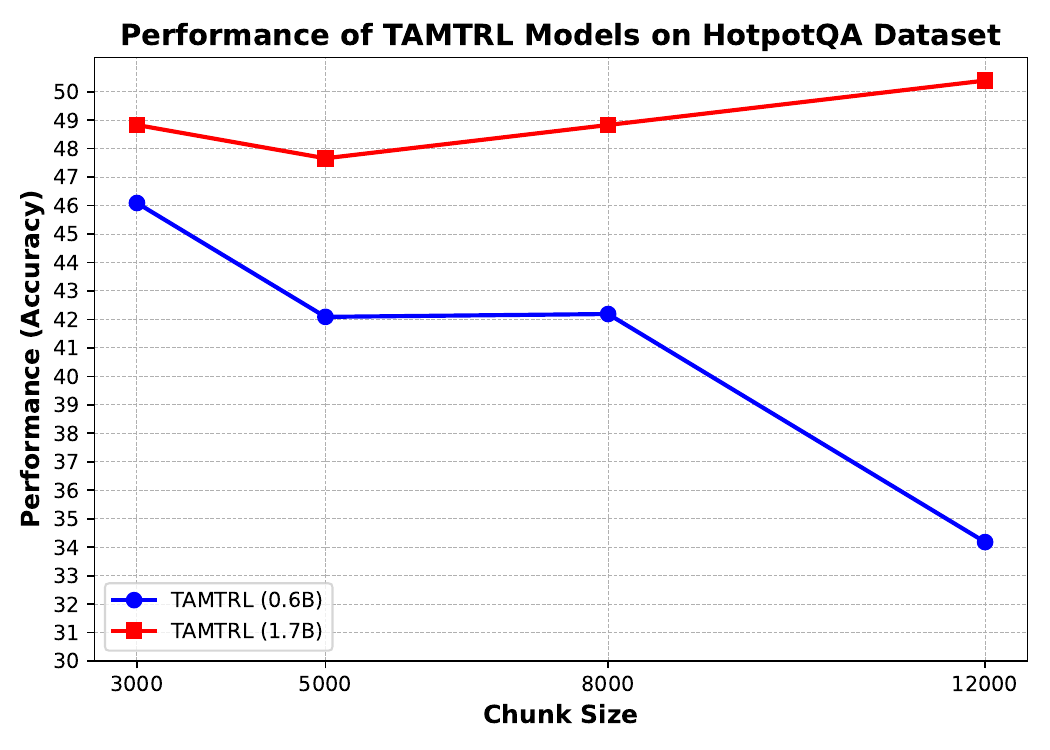} &
        \includegraphics[width=0.4\textwidth]{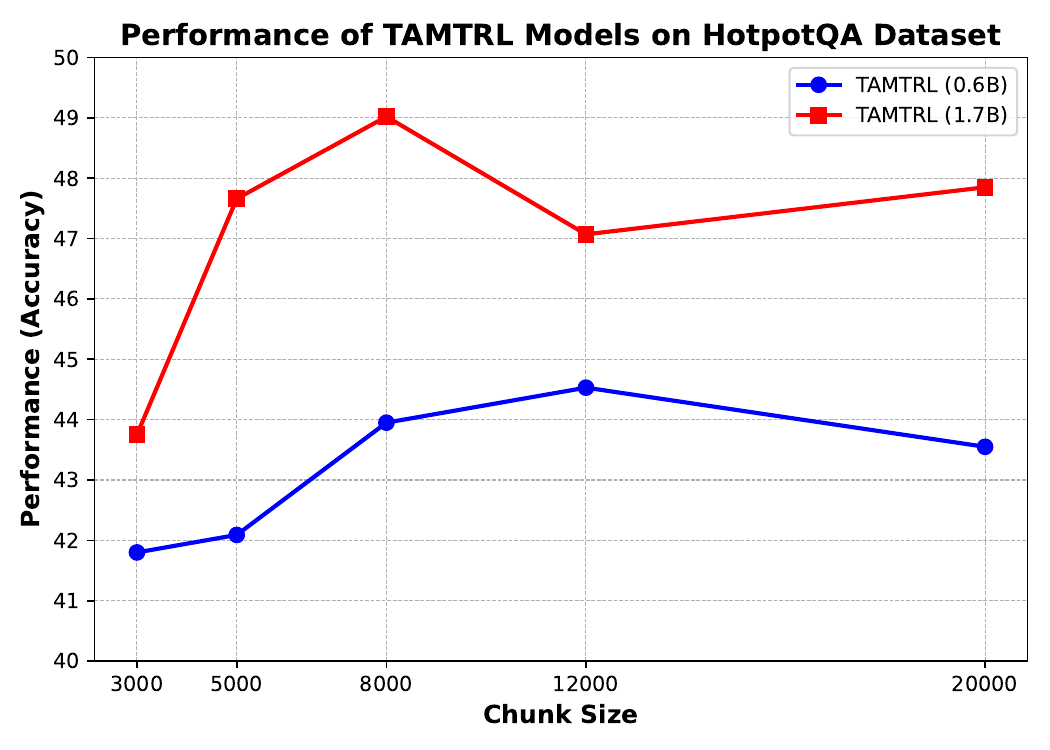}
    \end{tabular}
    \caption{Analysis of the impact of different chunk sizes on the performance of TAMTRL on the HotpotQA dataset. (Left) Test results with varying training chunk sizes at a fixed test chunk size of 5000; (Right) Test results with varying test chunk sizes at a fixed training chunk size of 5000.}
    \label{fig:cs_analysis}
\end{figure*}

\subsection{Impact of Varying Training Data Sizes}
To assess the effect of training data scale on TAMTRL, we train the model with varying proportions of the training set, as shown in Fig.~\ref{fig:data_ratio_exp}. With limited data, performance is relatively weak. As the data ratio increases, performance improves consistently and reaches a strong level at 50\% of the data. Further scaling the dataset continues to bring improvements, albeit at a slower pace, with the best performance obtained when the full dataset (100\%) is used.

\begin{figure*}[t]
    \centering
    \setlength{\tabcolsep}{2pt}
    \begin{tabular}{ccc}
        \includegraphics[width=0.4\textwidth]{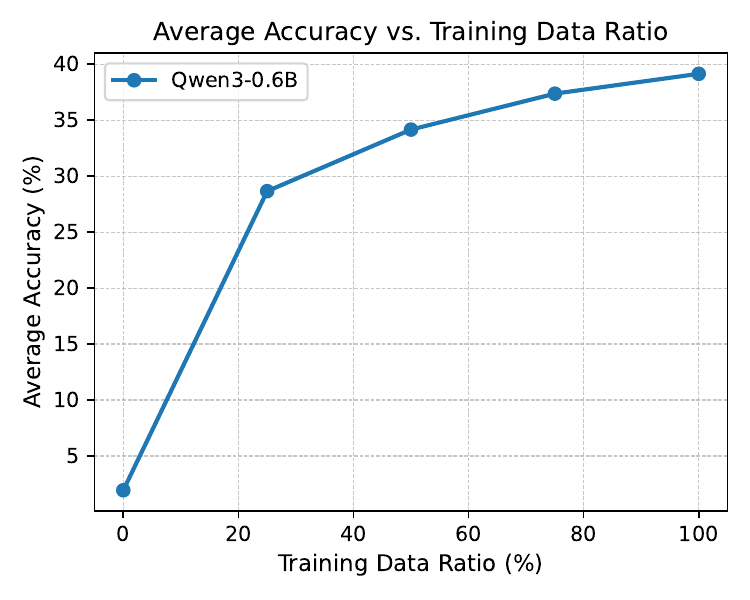} &
        \includegraphics[width=0.4\textwidth]{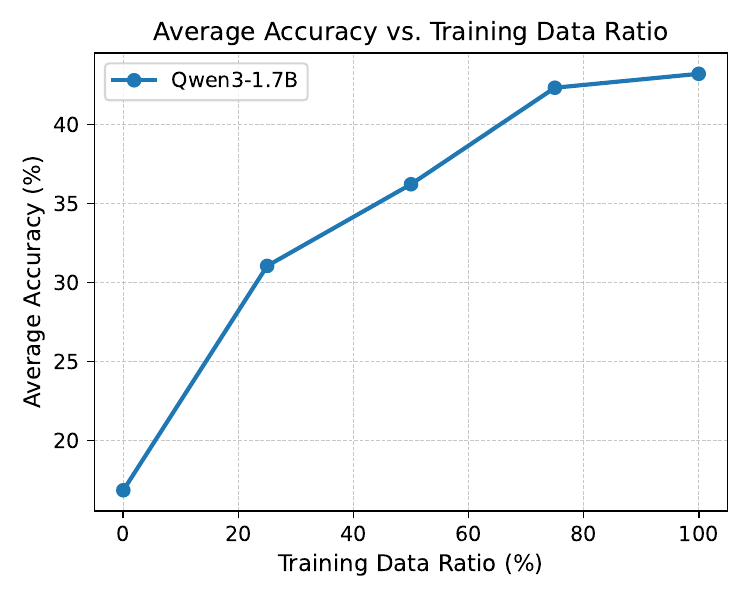}
    \end{tabular}
    \caption{Average performance variation of the model with TAMTRL under different training data ratios on seven datasets.}
    \label{fig:data_ratio_exp}
\end{figure*}

\section{Conclusion}
Long-text processing poses significant challenges for LLMs. To address this, MemAgent improves long-context modeling through chunk-wise processing, a memory mechanism, and multi-turn RL, but it still suffers from temporal credit assignment. To mitigate this issue, we first formulate sequential long-document processing as a POMDP, and then develop a CTDE-inspired teacher-student framework. Building on this formulation, we propose TAMTRL, where a teacher with privileged local-global context provides turn-level credit assignment during training to supervise a student model that performs memory updates using only local observations at execution time. This approach requires no external models and incurs minimal computational overhead. We further conduct a theoretical analysis by decomposing the optimization objective of TAMTRL to validate the rationale behind its learning goal. Experimental results on seven benchmarks using Qwen3-0.6B and Qwen3-1.7B demonstrate that TAMTRL consistently outperforms strong baseline methods, validating the effectiveness of our approach. Additionally, we explore the impact of different reward designs, varying information density, chunk sizes, and training data ratios on TAMTRL's performance through further experimental analysis.

\section*{Funding information}
This work was supported by the National Key Research and Development Program of China (Grant No. 2025YFF1505704), the National Science and Technology Major Project (Grant No. 2022ZD0117402), the National Natural Science Foundation of China (Grant No. 62441617), and the Beijing Advanced Innovation Center for Future Blockchain and Privacy Computing.

\section*{Declaration of competing interest}
The authors declare that they have no known competing financial interests or personal relationships that could have appeared to influence the work reported in this paper.

\bibliographystyle{elsarticle-num} 
\bibliography{references}

\begin{thebibliography}{10}
\expandafter\ifx\csname url\endcsname\relax
  \def\url#1{\texttt{#1}}\fi
\expandafter\ifx\csname urlprefix\endcsname\relax\def\urlprefix{URL }\fi
\expandafter\ifx\csname href\endcsname\relax
  \def\href#1#2{#2} \def\path#1{#1}\fi

\bibitem{guo2025deepseek}
D.~Guo, D.~Yang, H.~Zhang, J.~Song, P.~Wang, Q.~Zhu, R.~Xu, R.~Zhang, S.~Ma, X.~Bi, et~al., Deepseek-r1: Incentivizing reasoning capability in llms via reinforcement learning, arXiv preprint arXiv:2501.12948 (2025).

\bibitem{ke2025survey}
Z.~Ke, F.~Jiao, Y.~Ming, X.-P. Nguyen, A.~Xu, D.~X. Long, M.~Li, C.~Qin, P.~Wang, S.~Savarese, et~al., A survey of frontiers in llm reasoning: Inference scaling, learning to reason, and agentic systems, arXiv preprint arXiv:2504.09037 (2025).

\bibitem{wei2025plangenllms}
H.~Wei, Z.~Zhang, S.~He, T.~Xia, S.~Pan, F.~Liu, Plangenllms: A modern survey of llm planning capabilities, in: Proceedings of the 63rd Annual Meeting of the Association for Computational Linguistics (Volume 1: Long Papers), 2025, pp. 19497--19521.

\bibitem{song2023llm}
C.~H. Song, J.~Wu, C.~Washington, B.~M. Sadler, W.-L. Chao, Y.~Su, Llm-planner: Few-shot grounded planning for embodied agents with large language models, in: Proceedings of the IEEE/CVF international conference on computer vision, 2023, pp. 2998--3009.

\bibitem{jin2025search}
B.~Jin, H.~Zeng, Z.~Yue, J.~Yoon, S.~Arik, D.~Wang, H.~Zamani, J.~Han, Search-r1: Training llms to reason and leverage search engines with reinforcement learning, arXiv preprint arXiv:2503.09516 (2025).

\bibitem{xue2025simpletir}
Z.~Xue, L.~Zheng, Q.~Liu, Y.~Li, X.~Zheng, Z.~Ma, B.~An, Simpletir: End-to-end reinforcement learning for multi-turn tool-integrated reasoning, arXiv preprint arXiv:2509.02479 (2025).

\bibitem{gao2025beyond}
J.~Gao, W.~Fu, M.~Xie, S.~Xu, C.~He, Z.~Mei, B.~Zhu, Y.~Wu, Beyond ten turns: Unlocking long-horizon agentic search with large-scale asynchronous rl, arXiv preprint arXiv:2508.07976 (2025).

\bibitem{chhikara2025mem0}
P.~Chhikara, D.~Khant, S.~Aryan, T.~Singh, D.~Yadav, Mem0: Building production-ready ai agents with scalable long-term memory, arXiv preprint arXiv:2504.19413 (2025).

\bibitem{yu2025memagent}
H.~Yu, T.~Chen, J.~Feng, J.~Chen, W.~Dai, Q.~Yu, Y.-Q. Zhang, W.-Y. Ma, J.~Liu, M.~Wang, et~al., Memagent: Reshaping long-context llm with multi-conv rl-based memory agent, arXiv preprint arXiv:2507.02259 (2025).

\bibitem{kim2025astro}
J.~Kim, A.~Goyal, L.~Tan, H.~Hajishirzi, S.~Iyer, T.~Wang, Astro: Teaching language models to reason by reflecting and backtracking in-context, arXiv preprint arXiv:2507.00417 (2025).

\bibitem{guan2025rstar}
X.~Guan, L.~L. Zhang, Y.~Liu, N.~Shang, Y.~Sun, Y.~Zhu, F.~Yang, M.~Yang, Rstar-math: Small llms can master math reasoning with self-evolved deep thinking, arXiv preprint arXiv:2501.04519 (2025).

\bibitem{stephan2025calculation}
A.~Stephan, D.~Zhu, M.~A{\ss}enmacher, X.~Shen, B.~Roth, From calculation to adjudication: Examining llm judges on mathematical reasoning tasks, in: Proceedings of the Fourth Workshop on Generation, Evaluation and Metrics (GEM$^2$), 2025, pp. 759--773.

\bibitem{kim2024prometheus}
S.~Kim, J.~Suk, S.~Longpre, B.~Y. Lin, J.~Shin, S.~Welleck, G.~Neubig, M.~Lee, K.~Lee, M.~Seo, Prometheus 2: An open source language model specialized in evaluating other language models, in: Proceedings of the 2024 Conference on Empirical Methods in Natural Language Processing, 2024, pp. 4334--4353.

\bibitem{ma2023let}
Q.~Ma, H.~Zhou, T.~Liu, J.~Yuan, P.~Liu, Y.~You, H.~Yang, Let's reward step by step: Step-level reward model as the navigators for reasoning, arXiv preprint arXiv:2310.10080 (2023).

\bibitem{zhang2025lessons}
Z.~Zhang, C.~Zheng, Y.~Wu, B.~Zhang, R.~Lin, B.~Yu, D.~Liu, J.~Zhou, J.~Lin, The lessons of developing process reward models in mathematical reasoning, arXiv preprint arXiv:2501.07301 (2025).

\bibitem{lowe2017multi}
R.~Lowe, Y.~I. Wu, A.~Tamar, J.~Harb, O.~Pieter~Abbeel, I.~Mordatch, Multi-agent actor-critic for mixed cooperative-competitive environments, Advances in neural information processing systems 30 (2017).

\bibitem{yu2022surprising}
C.~Yu, A.~Velu, E.~Vinitsky, J.~Gao, Y.~Wang, A.~Bayen, Y.~Wu, The surprising effectiveness of ppo in cooperative multi-agent games, Advances in neural information processing systems 35 (2022) 24611--24624.

\bibitem{team2025kimi}
K.~Team, A.~Du, B.~Gao, B.~Xing, C.~Jiang, C.~Chen, C.~Li, C.~Xiao, C.~Du, C.~Liao, et~al., Kimi k1. 5: Scaling reinforcement learning with llms, arXiv preprint arXiv:2501.12599 (2025).

\bibitem{shao2024deepseekmath}
Z.~Shao, P.~Wang, Q.~Zhu, R.~Xu, J.~Song, X.~Bi, H.~Zhang, M.~Zhang, Y.~Li, Y.~Wu, et~al., Deepseekmath: Pushing the limits of mathematical reasoning in open language models, arXiv preprint arXiv:2402.03300 (2024).

\bibitem{schulman2017proximal}
J.~Schulman, F.~Wolski, P.~Dhariwal, A.~Radford, O.~Klimov, Proximal policy optimization algorithms, arXiv preprint arXiv:1707.06347 (2017).

\bibitem{hu2025reinforce++}
J.~Hu, Reinforce++: A simple and efficient approach for aligning large language models, arXiv e-prints (2025) arXiv--2501.

\bibitem{yu2025dapo}
Q.~Yu, Z.~Zhang, R.~Zhu, Y.~Yuan, X.~Zuo, Y.~Yue, W.~Dai, T.~Fan, G.~Liu, L.~Liu, et~al., Dapo: An open-source llm reinforcement learning system at scale, arXiv preprint arXiv:2503.14476 (2025).

\bibitem{magister2023teaching}
L.~C. Magister, J.~Mallinson, J.~Adamek, E.~Malmi, A.~Severyn, Teaching small language models to reason, in: Proceedings of the 61st Annual Meeting of the Association for Computational Linguistics (Volume 2: Short Papers), 2023, pp. 1773--1781.

\bibitem{dong2025agentic}
G.~Dong, H.~Mao, K.~Ma, L.~Bao, Y.~Chen, Z.~Wang, Z.~Chen, J.~Du, H.~Wang, F.~Zhang, et~al., Agentic reinforced policy optimization, arXiv preprint arXiv:2507.19849 (2025).

\bibitem{li2025torl}
X.~Li, H.~Zou, P.~Liu, Torl: Scaling tool-integrated rl, arXiv preprint arXiv:2503.23383 (2025).

\bibitem{lin2025understanding}
H.~Lin, Z.~Xu, Understanding tool-integrated reasoning, arXiv preprint arXiv:2508.19201 (2025).

\bibitem{peng2023rwkv}
B.~Peng, E.~Alcaide, Q.~Anthony, A.~Albalak, S.~Arcadinho, S.~Biderman, H.~Cao, X.~Cheng, M.~Chung, L.~Derczynski, et~al., Rwkv: Reinventing rnns for the transformer era, in: Findings of the association for computational linguistics: EMNLP 2023, 2023, pp. 14048--14077.

\bibitem{child2019generating}
R.~Child, S.~Gray, A.~Radford, I.~Sutskever, Generating long sequences with sparse transformers, arXiv preprint arXiv:1904.10509 (2019).

\bibitem{katharopoulos2020transformers}
A.~Katharopoulos, A.~Vyas, N.~Pappas, F.~Fleuret, Transformers are rnns: Fast autoregressive transformers with linear attention, in: International conference on machine learning, PMLR, 2020, pp. 5156--5165.

\bibitem{beltagy2020longformer}
I.~Beltagy, M.~E. Peters, A.~Cohan, Longformer: The long-document transformer, arXiv preprint arXiv:2004.05150 (2020).

\bibitem{zhao2019explicit}
G.~Zhao, J.~Lin, Z.~Zhang, X.~Ren, Q.~Su, X.~Sun, Explicit sparse transformer: Concentrated attention through explicit selection, arXiv preprint arXiv:1912.11637 (2019).

\bibitem{gu2021efficiently}
A.~Gu, K.~Goel, C.~R{\'e}, Efficiently modeling long sequences with structured state spaces, arXiv preprint arXiv:2111.00396 (2021).

\bibitem{ding2024longrope}
Y.~Ding, L.~L. Zhang, C.~Zhang, Y.~Xu, N.~Shang, J.~Xu, F.~Yang, M.~Yang, Longrope: Extending llm context window beyond 2 million tokens, arXiv preprint arXiv:2402.13753 (2024).

\bibitem{hu2025memory}
Y.~Hu, S.~Liu, Y.~Yue, G.~Zhang, B.~Liu, F.~Zhu, J.~Lin, H.~Guo, S.~Dou, Z.~Xi, et~al., Memory in the age of ai agents, arXiv preprint arXiv:2512.13564 (2025).

\bibitem{zhong2023memorybank}
W.~Zhong, L.~Guo, Q.~Gao, H.~Ye, Y.~Wang, Memorybank: Enhancing large language models with long-term memory, arXiv preprint arXiv:2305.10250 (2023).

\bibitem{chen2024meminsight}
Y.~Chen, X.~Li, Y.~Liu, W.~Zhang, T.-Y. Liu, Meminsight: Autonomous memory augmentation for llm agents, arXiv preprint arXiv:2407.09876 (2024).

\bibitem{fang2025memp}
R.~Fang, Y.~Liang, X.~Wang, J.~Wu, S.~Qiao, P.~Xie, F.~Huang, H.~Chen, N.~Zhang, Memp: Exploring agent procedural memory, arXiv preprint arXiv:2508.06433 (2025).

\bibitem{zhang2025memory}
Y.~Zhang, J.~Shu, Y.~Ma, X.~Lin, S.~Wu, J.~Sang, Memory as action: Autonomous context curation for long-horizon agentic tasks, arXiv preprint arXiv:2510.12635 (2025).

\bibitem{wang2026infmem}
X.~Wang, M.~Li, P.~Lu, X.-W. Chang, L.~Shang, J.~Li, F.~Mi, P.~Parthasarathi, Y.~Cui, Infmem: Learning system-2 memory control for long-context agent, arXiv preprint arXiv:2602.02704 (2026).

\bibitem{li2024chain}
Y.~Zhang, R.~Sun, Y.~Chen, T.~Pfister, R.~Zhang, S.~{\"O}. Arik, Chain of agents: Large language models collaborating on long-context tasks, Advances in Neural Information Processing Systems 37 (2024) 132208--132237.

\bibitem{wang2025mem}
Y.~Wang, R.~Takanobu, Z.~Liang, Y.~Mao, Y.~Hu, J.~McAuley, X.~Wu, Mem-$\{$$\backslash$alpha$\}$: Learning memory construction via reinforcement learning, arXiv preprint arXiv:2509.25911 (2025).

\bibitem{yuan2025memsearcher}
Q.~Yuan, J.~Lou, Z.~Li, J.~Chen, Y.~Lu, H.~Lin, L.~Sun, D.~Zhang, X.~Han, Memsearcher: Training llms to reason, search and manage memory via end-to-end reinforcement learning, arXiv preprint arXiv:2511.02805 (2025).

\bibitem{pignatelli2023survey}
E.~Pignatelli, J.~Ferret, M.~Geist, T.~Mesnard, H.~van Hasselt, O.~Pietquin, L.~Toni, A survey of temporal credit assignment in deep reinforcement learning, arXiv preprint arXiv:2312.01072 (2023).

\bibitem{sutton1988learning}
R.~S. Sutton, Learning to predict by the methods of temporal differences, Machine learning 3~(1) (1988) 9--44.

\bibitem{sutton2016emphatic}
R.~S. Sutton, A.~R. Mahmood, M.~White, An emphatic approach to the problem of off-policy temporal-difference learning, Journal of Machine Learning Research 17~(73) (2016) 1--29.

\bibitem{sutton2011horde}
R.~S. Sutton, J.~Modayil, M.~Delp, T.~Degris, P.~M. Pilarski, A.~White, D.~Precup, Horde: A scalable real-time architecture for learning knowledge from unsupervised sensorimotor interaction, in: The 10th international conference on autonomous agents and multiagent systems-volume 2, 2011, pp. 761--768.

\bibitem{andrychowicz2017hindsight}
M.~Andrychowicz, F.~Wolski, A.~Ray, J.~Schneider, R.~Fong, P.~Welinder, B.~McGrew, J.~Tobin, O.~Pieter~Abbeel, W.~Zaremba, Hindsight experience replay, Advances in neural information processing systems 30 (2017).

\bibitem{schmidhuber2019reinforcement}
J.~Schmidhuber, Reinforcement learning upside down: Don't predict rewards--just map them to actions, arXiv preprint arXiv:1912.02875 (2019).

\bibitem{janner2021offline}
M.~Janner, Q.~Li, S.~Levine, Offline reinforcement learning as one big sequence modeling problem, Advances in neural information processing systems 34 (2021) 1273--1286.

\bibitem{chen2021decision}
L.~Chen, K.~Lu, A.~Rajeswaran, K.~Lee, A.~Grover, M.~Laskin, P.~Abbeel, A.~Srinivas, I.~Mordatch, Decision transformer: Reinforcement learning via sequence modeling, Advances in neural information processing systems 34 (2021) 15084--15097.

\bibitem{li2024llms}
H.~Li, Q.~Dong, J.~Chen, H.~Su, Y.~Zhou, Q.~Ai, Z.~Ye, Y.~Liu, Llms-as-judges: a comprehensive survey on llm-based evaluation methods, arXiv preprint arXiv:2412.05579 (2024).

\bibitem{bo2024reflective}
X.~Bo, Z.~Zhang, Q.~Dai, X.~Feng, L.~Wang, R.~Li, X.~Chen, J.-R. Wen, Reflective multi-agent collaboration based on large language models, Advances in Neural Information Processing Systems 37 (2024) 138595--138631.

\bibitem{yang2018hotpotqa}
Z.~Yang, P.~Qi, S.~Zhang, Y.~Bengio, W.~Cohen, R.~Salakhutdinov, C.~D. Manning, Hotpotqa: A dataset for diverse, explainable multi-hop question answering, in: Proceedings of the 2018 conference on empirical methods in natural language processing, 2018, pp. 2369--2380.

\bibitem{hsieh2024ruler}
C.-P. Hsieh, S.~Sun, S.~Kriman, S.~Acharya, D.~Rekesh, F.~Jia, Y.~Zhang, B.~Ginsburg, Ruler: What's the real context size of your long-context language models?, arXiv preprint arXiv:2404.06654 (2024).

\bibitem{zelikman2024star}
E.~Zelikman, Y.~Wu, J.~Mu, N.~D. Goodman, Star: Self-taught reasoner bootstrapping reasoning with reasoning, in: Proc. the 36th International Conference on Neural Information Processing Systems, Vol. 1126, 2024.

\bibitem{muralidharan2024compact}
S.~Muralidharan, S.~Turuvekere~Sreenivas, R.~Joshi, M.~Chochowski, M.~Patwary, M.~Shoeybi, B.~Catanzaro, J.~Kautz, P.~Molchanov, Compact language models via pruning and knowledge distillation, Advances in Neural Information Processing Systems 37 (2024) 41076--41102.

\bibitem{yang2025qwen3}
A.~Yang, A.~Li, B.~Yang, B.~Zhang, B.~Hui, B.~Zheng, B.~Yu, C.~Gao, C.~Huang, C.~Lv, et~al., Qwen3 technical report, arXiv preprint arXiv:2505.09388 (2025).

\bibitem{sheng2024hybridflow}
G.~Sheng, C.~Zhang, Z.~Ye, X.~Wu, W.~Zhang, R.~Zhang, Y.~Peng, H.~Lin, C.~Wu, Hybridflow: A flexible and efficient rlhf framework, arXiv preprint arXiv: 2409.19256 (2024).

\bibitem{kamradt2023needle}
G.~Kamradt, \href{https://github.com/gkamradt/LLMTest\_NeedleInAHaystack}{{Needle In A Haystack - pressure testing LLMs}}.
\newline\urlprefix\url{https://github.com/gkamradt/LLMTest\_NeedleInAHaystack}

\bibitem{kovcisky2018narrativeqa}
T.~Ko{\v{c}}isk{\`y}, J.~Schwarz, P.~Blunsom, C.~Dyer, K.~M. Hermann, G.~Melis, E.~Grefenstette, The narrativeqa reading comprehension challenge, Transactions of the Association for Computational Linguistics 6 (2018) 317--328.

\bibitem{dasigi2021dataset}
P.~Dasigi, K.~Lo, I.~Beltagy, A.~Cohan, N.~A. Smith, M.~Gardner, A dataset of information-seeking questions and answers anchored in research papers, in: Proceedings of the 2021 Conference of the North American Chapter of the Association for Computational Linguistics: Human Language Technologies, 2021, pp. 4599--4610.

\bibitem{ho2020constructing}
X.~Ho, A.-K.~D. Nguyen, S.~Sugawara, A.~Aizawa, Constructing a multi-hop qa dataset for comprehensive evaluation of reasoning steps, in: Proceedings of the 28th International Conference on Computational Linguistics, 2020, pp. 6609--6625.

\bibitem{trivedi2022musique}
H.~Trivedi, N.~Balasubramanian, T.~Khot, A.~Sabharwal, Musique: Multihop questions via single-hop question composition, Transactions of the Association for Computational Linguistics 10 (2022) 539--554.

\bibitem{devlin2019bert}
J.~Devlin, M.-W. Chang, K.~Lee, K.~Toutanova, Bert: Pre-training of deep bidirectional transformers for language understanding, in: Proceedings of the 2019 conference of the North American chapter of the association for computational linguistics: human language technologies, volume 1 (long and short papers), 2019, pp. 4171--4186.

\bibitem{kwon2023efficient}
W.~Kwon, Z.~Li, S.~Zhuang, Y.~Sheng, L.~Zheng, C.~H. Yu, J.~Gonzalez, H.~Zhang, I.~Stoica, Efficient memory management for large language model serving with pagedattention, in: Proceedings of the 29th symposium on operating systems principles, 2023, pp. 611--626.

\end{thebibliography}

\clearpage
\appendix
\renewcommand{\thetable}{A\arabic{table}}
\renewcommand{\thefigure}{A\arabic{figure}}
\renewcommand{\thetheorem}{A\arabic{theorem}}
\setcounter{table}{0}
\setcounter{figure}{0}
\setcounter{theorem}{0}

\section{Theoretical Proof of Theorem 1}
\label{appendix:proof_of_theorem1}
\begin{theorem}[Information-Theoretic Decomposition of the TAMTRL Objective]
Consider an optimization step at state $S_t = (q, D_t, M_t)$ within a multi-step reasoning process. Let $\pi_\theta(\cdot \mid S_t)$ denote the policy generating the next memory $M_{t+1}$, and let $r_i \in \{0,1\}$ be the binary indicator of final task success, whose distribution depends on $M_{t+1}$ and the subsequent rollout. Given a teacher log-likelihood score $\hat{p}_{\text{t}} = \log \pi_{\text{teacher}}(M_{t+1} \mid S_t)$ and a reference policy $\pi_{\text{ref}}$, the TAMTRL objective is defined as:
\begin{equation}
    \mathcal{J}(\theta) \;=\; \mathbb{E}_{M_{t+1} \sim \pi_\theta}\bigl[\hat{p}_{\text{t}} \cdot r_i \bigr] \;-\; \beta \, D_{\mathrm{KL}}\bigl[\pi_\theta(\cdot\mid S_t) \;\|\; \pi_{\text{ref}}(\cdot\mid S_t)\bigr]. \nonumber
\end{equation}
This objective $\mathcal{J}(\theta)$ can be exactly decomposed into a weighted sum comprising a success-conditional optimization term, a failure-conditional regularization term, and a memory-reward mutual information term:
\begin{equation}
    \mathcal{J}(\theta) \;=\; P(r_i=1 \mid S_t) \cdot \mathcal{L}_{\text{succ}}(\theta) \;+\; P(r_i=0 \mid S_t) \cdot \mathcal{L}_{\text{fail}}(\theta) \;+\; \beta \, I_{\pi_\theta}(M_{t+1} ; r_i \mid S_t), \nonumber
\end{equation}
where the components are defined as:
\begin{align}
    \mathcal{L}_{\text{succ}}(\theta) &= \mathbb{E}_{\pi_\theta}\bigl[\log \pi_{\text{teacher}} \mid r_i=1\bigr] - \beta \, D_{\mathrm{KL}}\bigl[\pi_\theta(\cdot \mid S_t, r_i=1) \;\|\; \pi_{\text{ref}}(\cdot \mid S_t)\bigr], \nonumber \\
    \mathcal{L}_{\text{fail}}(\theta) &= - \beta \, D_{\mathrm{KL}}\bigl[\pi_\theta(\cdot \mid S_t, r_i=0) \;\|\; \pi_{\text{ref}}(\cdot \mid S_t)\bigr], \nonumber
\end{align}
and $I_{\pi_\theta}(M_{t+1} ; r_i \mid S_t)$ represents the mutual information between the generated memory $M_{t+1}$ and the outcome $r_i$ given state $S_t$.
\end{theorem}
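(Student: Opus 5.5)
The plan is to write $M := M_{t+1}$ and $r := r_i$, and work with the joint distribution of $(M,r)$ given $S_t$. This joint factorizes as $p_\theta(M,r\mid S_t) = \pi_\theta(M\mid S_t)\,P(r\mid M,S_t)$. Here $P(r\mid M,S_t)$ is the success probability induced by the subsequent rollout, treated as a fixed kernel for this step. The conditional laws $\pi_\theta(\cdot\mid S_t,r)$ appearing in $\mathcal{L}_{\text{succ}}$ and $\mathcal{L}_{\text{fail}}$ are then defined by Bayes' rule:
\begin{equation}
\pi_\theta(M\mid S_t,r) = \pi_\theta(M\mid S_t)\,P(r\mid M,S_t)/P(r\mid S_t),
\end{equation}
with $P(r\mid S_t) = \sum_M \pi_\theta(M\mid S_t)P(r\mid M,S_t)$. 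The proof then treats the reward term and the KL term separately and adds the results.

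\emph{Reward term.} Since $r\in\{0,1\}$, the product $\hat{p}_{\text{t}}\cdot r$ vanishes on failures. Conditioning on $r$ (tower property) gives
\begin{equation}
\mathbb{E}_{\pi_\theta}[\hat{p}_{\text{t}}\, r] = P(r=1\mid S_t)\,\mathbb{E}_{\pi_\theta}[\log\pi_{\text{teacher}}(M\mid S_t)\mid r=1].
\end{equation}
This already produces the teacher part of $\mathcal{L}_{\text{succ}}$ with the correct weight, and contributes nothing to the failure branch.

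\emph{KL term.} This is the key step: a chain-rule identity for KL divergence. I will compute the average conditional divergence $\sum_{r} P(r\mid S_t)\, D_{\mathrm{KL}}[\pi_\theta(\cdot\mid S_t,r)\,\|\,\pi_{\text{ref}}(\cdot\mid S_t)]$. Writing it as $\sum_{M,r} p_\theta(M,r\mid S_t)\log\frac{\pi_\theta(M\mid S_t,r)}{\pi_{\text{ref}}(M\mid S_t)}$, I insert $\pi_\theta(M\mid S_t)$ in the logarithm and split the log into two pieces, $\log\frac{\pi_\theta(M\mid S_t,r)}{\pi_\theta(M\mid S_t)} + \log\frac{\pi_\theta(M\mid S_t)}{\pi_{\text{ref}}(M\mid S_t)}$. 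The first piece averages to $I_{\pi_\theta}(M;r\mid S_t)$. The second piece, after marginalizing over $r$, is $D_{\mathrm{KL}}[\pi_\theta(\cdot\mid S_t)\,\|\,\pi_{\text{ref}}(\cdot\mid S_t)]$. Hence
\begin{equation}
D_{\mathrm{KL}}[\pi_\theta\,\|\,\pi_{\text{ref}}] = \sum_{r\in\{0,1\}} P(r\mid S_t)\, D_{\mathrm{KL}}[\pi_\theta(\cdot\mid S_t,r)\,\|\,\pi_{\text{ref}}] - I_{\pi_\theta}(M;r\mid S_t).
\end{equation}
Multiplying by $-\beta$ and adding the reward term, the $r=1$ pieces group into $P(r=1\mid S_t)\mathcal{L}_{\text{succ}}$ and the $r=0$ piece into $P(r=0\mid S_t)\mathcal{L}_{\text{fail}}$. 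What remains is $+\beta I$, which is exactly the claimed decomposition.

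The main obstacle is not the algebra but well-definedness. I need $P(r=b\mid S_t)>0$ for the conditionals to exist; if one outcome has probability zero, I adopt the convention that its weighted term is $0$, and both sides still agree. I also need $\pi_\theta(\cdot\mid S_t)\ll\pi_{\text{ref}}(\cdot\mid S_t)$, so that every KL and the log-teacher expectation are finite. The latter holds since $\pi_{\text{teacher}}$ has full support over a finite vocabulary at bounded length. Under these standing assumptions, stated at the outset, all sums are finite and the rearrangements are justified.
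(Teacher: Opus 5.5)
Your proof is correct and follows the paper's argument. The reward term is handled identically, by conditioning on the binary outcome. Your log-ratio insertion gives the same identity $D_{\mathrm{KL}}[\pi_\theta\|\pi_{\text{ref}}]=\sum_{r}P(r\mid S_t)\,D_{\mathrm{KL}}[\pi_\theta(\cdot\mid S_t,r)\|\pi_{\text{ref}}]-I_{\pi_\theta}(M_{t+1};r_i\mid S_t)$ that the paper gets by writing the KL as negative entropy minus cross-entropy and applying $H(M)=H(M\mid r)+I(M;r)$. Your handling of zero-probability outcomes and absolute continuity, which the paper leaves implicit, is a useful addition.
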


\begin{proof}
We begin by decomposing the Kullback-Leibler (KL) divergence term. Using the definition of KL divergence and the expansion of entropy, we have:
\begin{equation}
    D_{\mathrm{KL}}\bigl[\pi_\theta(\cdot|S_t) \| \pi_{\text{ref}}(\cdot|S_t)\bigr] = - H_{\pi_\theta}(M_{t+1} \mid S_t) - \mathbb{E}_{M_{t+1} \sim \pi_\theta}\bigl[\log \pi_{\text{ref}}(M_{t+1} \mid S_t)\bigr]. \label{eq:kl_def}
\end{equation}
Invoking the information-theoretic identity $H(X) = H(X|Y) + I(X;Y)$, we decompose the conditional entropy of $M_{t+1}$ with respect to the binary outcome $r_i$:
\begin{equation}
    H_{\pi_\theta}(M_{t+1} \mid S_t) = \underbrace{\sum_{r \in \{0,1\}} P(r_i=r \mid S_t) H_{\pi_\theta}(M_{t+1} \mid S_t, r_i=r)}_{H_{\pi_\theta}(M_{t+1} \mid S_t, r_i)} \;+\; I_{\pi_\theta}(M_{t+1} ; r_i \mid S_t).
\end{equation}
Similarly, we expand the expectation of the log-reference policy over the values of $r_i$:
\begin{equation}
    \mathbb{E}_{\pi_\theta}\bigl[\log \pi_{\text{ref}}\bigr] = \sum_{r \in \{0,1\}} P(r_i=r \mid S_t) \, \mathbb{E}_{\pi_\theta}\bigl[\log \pi_{\text{ref}} \mid r_i=r\bigr].
\end{equation}
Substituting these expansions back into Eq.~\eqref{eq:kl_def} and regrouping terms by $r_i$, we obtain:
\begin{equation}
    \begin{aligned}
    D_{\mathrm{KL}}\bigl[\pi_\theta \| \pi_{\text{ref}}\bigr] = & \sum_{r \in \{0,1\}} P(r_i=r \mid S_t) \, D_{\mathrm{KL}}\bigl[\pi_\theta(\cdot \mid S_t, r_i=r) \;\|\; \pi_{\text{ref}}(\cdot \mid S_t)\bigr] \\
    & - I_{\pi_\theta}(M_{t+1} ; r_i \mid S_t).
    \end{aligned}
\end{equation}
Next, we examine the reward term. Since $r_i$ is a binary indicator, the expectation is non-zero only when $r_i=1$:
\begin{equation}
    \mathbb{E}_{\pi_\theta}\bigl[\hat{p}_{\text{t}} \cdot r_i\bigr] = P(r_i=1 \mid S_t) \cdot \mathbb{E}_{\pi_\theta}\bigl[\log \pi_{\text{teacher}} \mid r_i=1\bigr].
\end{equation}
Finally, substituting the decomposed KL term (multiplied by $-\beta$) and the reward term back into the original objective $\mathcal{J}(\theta)$, we get:
\begin{align*}
    \mathcal{J}(\theta) &= P(r_i=1 \mid S_t) \Bigl( \mathbb{E}_{\pi_\theta}[\log \pi_{\text{teacher}} \mid r_i=1] - \beta D_{\mathrm{KL}}[\pi_\theta(\cdot \mid r_i=1) \| \pi_{\text{ref}}] \Bigr) \\
    &\quad + P(r_i=0 \mid S_t) \Bigl( - \beta D_{\mathrm{KL}}[\pi_\theta(\cdot \mid r_i=0) \| \pi_{\text{ref}}] \Bigr) \\
    &\quad + \beta \, I_{\pi_\theta}(M_{t+1} ; r_i \mid S_t).
\end{align*}
Identifying the terms in parentheses as $\mathcal{L}_{\text{succ}}(\theta)$ and $\mathcal{L}_{\text{fail}}(\theta)$ respectively concludes the proof.
\end{proof}

\section{Experimental Details}
\subsection{Datasets for TAMTRL Main Experiments}
\label{appendix:datasets_statics}
We summarize the number of samples in the training and evaluation datasets in Table~\ref{tab:datasets_statics}. For all RL training, we use the synthetic HotpotQA-train dataset. For the SFT, STaR, and Vanilla-KD methods, we use the same data as in RL, processed to obtain the training data. Each trajectory is then split into multiple sample pairs based on the turn for training.

\begin{table}[t]
\centering
\small
\setlength{\tabcolsep}{4pt}
\renewcommand{\arraystretch}{1.1}
\begin{tabular}{ll}
\toprule
\textbf{Dataset} & \textbf{\# Train / Test} \\
\midrule
HotpotQA-train & 25600 / - \\
SFT-train & 65057 / - \\
STaR-train-0.6B & 5264 / - \\
STaR-train-1.7B & 60773 / - \\
Vanilla-KD-train & 80040 / - \\
HotpotQA-test & - / 128 \\
RULER-QA & - / 500 \\
NIAH & - / 500 \\
2Wikimqa & - / 200 \\
Musique & - / 200 \\
Narrativeqa & - / 200 \\
Qasper & - / 200 \\
\bottomrule
\end{tabular}
\caption{Statistics of the training and test datasets used by all methods.}
\label{tab:datasets_statics}
\end{table}

\subsection{Implementation Details}
\label{appendix:implementation_details}
Following MemAgent~\cite{yu2025memagent}, we use the DAPO~\cite{yu2025dapo} algorithm for all RL-based method. For the LLM-judge method, we use the non-thinking mode of Qwen3-8B as the judge to perform turn-level credit assignment, with the template shown in Figure~\ref{fig:judge_prompt}. For PRM, we train a BERT-based~\cite{devlin2019bert} process reward model on the STaR-train-1.7B dataset. Specifically, for each turn, we use the corresponding LLM input–output pair as the input to the PRM, and employ the final outcome reward as the supervision signal to predict correctness. The predicted probability of being correct is then used as the turn-level process reward for intermediate supervision. We train the model using a binary cross-entropy loss with a learning rate of 2e-5 for one epoch, achieving a final classification accuracy of 96.05\%. Following the analysis in Section~\ref{sebsec:reward_design_explore}, and similar to TAMTRL, we multiply all turn-level scores with the outcome reward to obtain the final reward for each turn, which helps avoid gradient conflicts and stabilize optimization. For all RL methods, we set a KL factor of \(1 \times 10^{-3}\) and disable the entropy loss. The AdamW optimizer with \(\beta_1 = 0.9\), \(\beta_2 = 0.95\), weight decay of 0.01, and a constant learning rate of \(1 \times 10^{-6}\) is employed, along with a linear warm-up scheduler with a warm-up step of 20. We use a rollout batch size of 32 and a group size of 8, accelerated via vLLM~\cite{kwon2023efficient}. For all distillation methods, we use a learning rate of \(1 \times 10^{-5}\) and train for 5 epochs. For SFT, we use Qwen3-14B to generate CoT reasoning traces on the training dataset, filtering the correct traces for fine-tuning. For Vanilla-KD, Qwen3-8B is used as the teacher model. For STaR, we roll out 8 responses per question and filter the correct responses for CoT distillation. For all RL methods, we use a learning rate of \(1 \times 10^{-6}\) and train for 800 steps with the same parameter settings as MemAgent. All experiments were conducted on 8 × A100 GPUs, each with 80 GB of memory. The statistics of the training data used by all methods are shown in Table~\ref{tab:datasets_statics}.

\subsection{Comparison of Computational Cost}
\label{appendix:computational_cost}
To compare the training overhead of different methods, we evaluate their training cost using Qwen3-0.6B as the backbone model on an 8$\times$A100 GPU setup with 80 GB total memory, as summarized in Table~\ref{tab:time_cost}. SFT and Vanilla-KD require external models to generate training data, resulting in relatively long data processing times. In contrast, STaR relies solely on self-generated signals, leading to faster processing; however, due to the model’s limited capability, the generated data are of lower quality, which ultimately yields inferior performance. RL-based methods achieve a better balance, offering stronger performance with moderate time overhead. LLM-judge requires reward signals provided by LLM rollouts for training, resulting in relatively slow training. PRM-based approaches further exacerbate this issue, as they involve an additional data collection stage, followed by 2.05 hours of PRM training and 38.66 hours of RL optimization, leading to the highest overall training overhead. For TAMTRL, the computation of teacher probability scores introduces only a minor additional cost, and, owing to the shorter average response length of the model (see Fig.~\ref{fig:training_dynamics_06b}), the overall training time is lower than that of MemAgent, further demonstrating the efficiency advantage of TAMTRL.

\begin{table}[t]
\centering
\small
\setlength{\tabcolsep}{4pt}
\renewcommand{\arraystretch}{1.1}
\begin{tabular}{lccc}
\toprule
\textbf{Method} & \textbf{Data Processing} & \textbf{Training} & \textbf{Total} \\
\midrule
SFT & 54.12 & 3.7 & 58.82 \\
STaR & 23.33 & 0.32 & 23.65 \\
Vanilla-KD & 37.66 & 21.59 & 59.25 \\
MemAgent & 0 & 37.75 & 37.75 \\
LLM-judge & 0 & 48.41 & 48.41 \\
PRM & 41.06 & 40.71 & 81.77 \\
TAMTRL & 0 & 33.79 & 33.79 \\
\bottomrule
\end{tabular}
\caption{Estimated computational time (hours) for various methods with Qwen3-0.6B.}
\label{tab:time_cost}
\end{table}

\section{Pseudocode of TAMTRL}
\label{appendix:pseudocode}
We present the pseudocode of TAMTRL in Algorithm~\ref{alg:tamtrl}.

\begin{algorithm*}[t]
\caption{TAMTRL: Teacher-Aligned Reward Reshaping for Multi-Turn Reinforcement Learning}
\begin{algorithmic}[1]
\REQUIRE Initial policy model $\pi_\theta$; task prompts $\mathcal{D}$; clean task prompts $\mathcal{C}$; hyperparameters $\varepsilon_{\text{low}}, \varepsilon_{\text{high}}$
\ENSURE Optimized policy model $\pi_\theta$

\FOR{step $= 1, \ldots, S$}
    \STATE Sample a batch $\mathcal{D}_b$ from $\mathcal{D}$
    \STATE Retrieve the corresponding clean prompts $\mathcal{C}_b$
    \STATE Update the old policy model $\pi_{\theta_{\text{old}}} \leftarrow \pi_\theta$
    \STATE Sample $G$ trajectories $\{o^i_j\}_{j=1}^G \sim \pi_{\theta_{\text{old}}}(\cdot \mid q^i)$ for each $q^i \in \mathcal{D}_b$
    
    \STATE Compute normalized teacher probability scores $\hat{p}^i_{tj}$ using $\theta_{\text{old}}$
    
    \STATE Compute outcome rewards $\{r^i_j\}_{j=1}^G$ via Exact Match (EM)
    \STATE Compute final rewards $R^i_{tj} = \hat{p}^i_{tj} \cdot r^i_j$
    
    \STATE Filter out low-quality samples and add the remaining trajectories to the dynamic sampling buffer
    \IF{buffer size $n_b < N$}
        \STATE \textbf{continue}
    \ENDIF
    
    \STATE For each turn-level trajectory $o^i_{tj}$ in the buffer, compute token-level advantages $\hat{A}^i_{tj}$ using Equation (7)
    
    \FOR{iteration $= 1, \ldots, \mu$}
        \STATE Update the policy model $\pi_\theta$ by maximizing the DAPO objective using Equation (3)
    \ENDFOR
\ENDFOR

\RETURN $\pi_\theta$
\end{algorithmic}
\label{alg:tamtrl}
\end{algorithm*}

\section{Prompts}
\label{appendix:prompts}
We used the same prompts as MemAgent, as shown in Figure~\ref{fig:memagent}. Follow~\cite{kim2024prometheus}, we design the prompt for the LLM-judge method as shown in Figure~\ref{fig:judge_prompt} to perform turn-level credit assignment.

\begin{figure*}[htbp]
    \centering
    \includegraphics[width=1\textwidth]{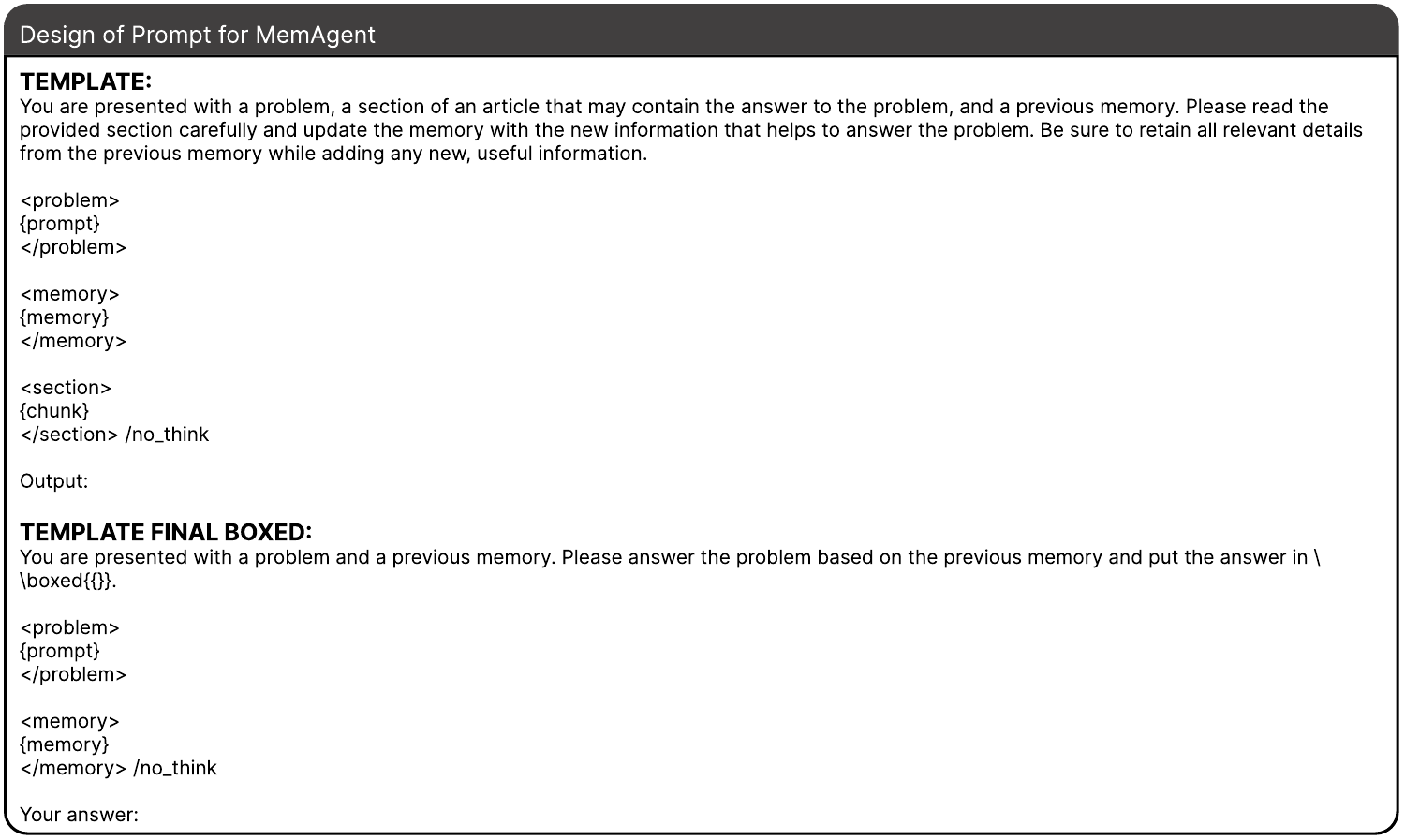}
    \caption{Design of Prompt for MemAgent.}
    \label{fig:memagent}
\end{figure*}

\begin{figure*}[htbp]
    \centering
    \includegraphics[width=1\textwidth]{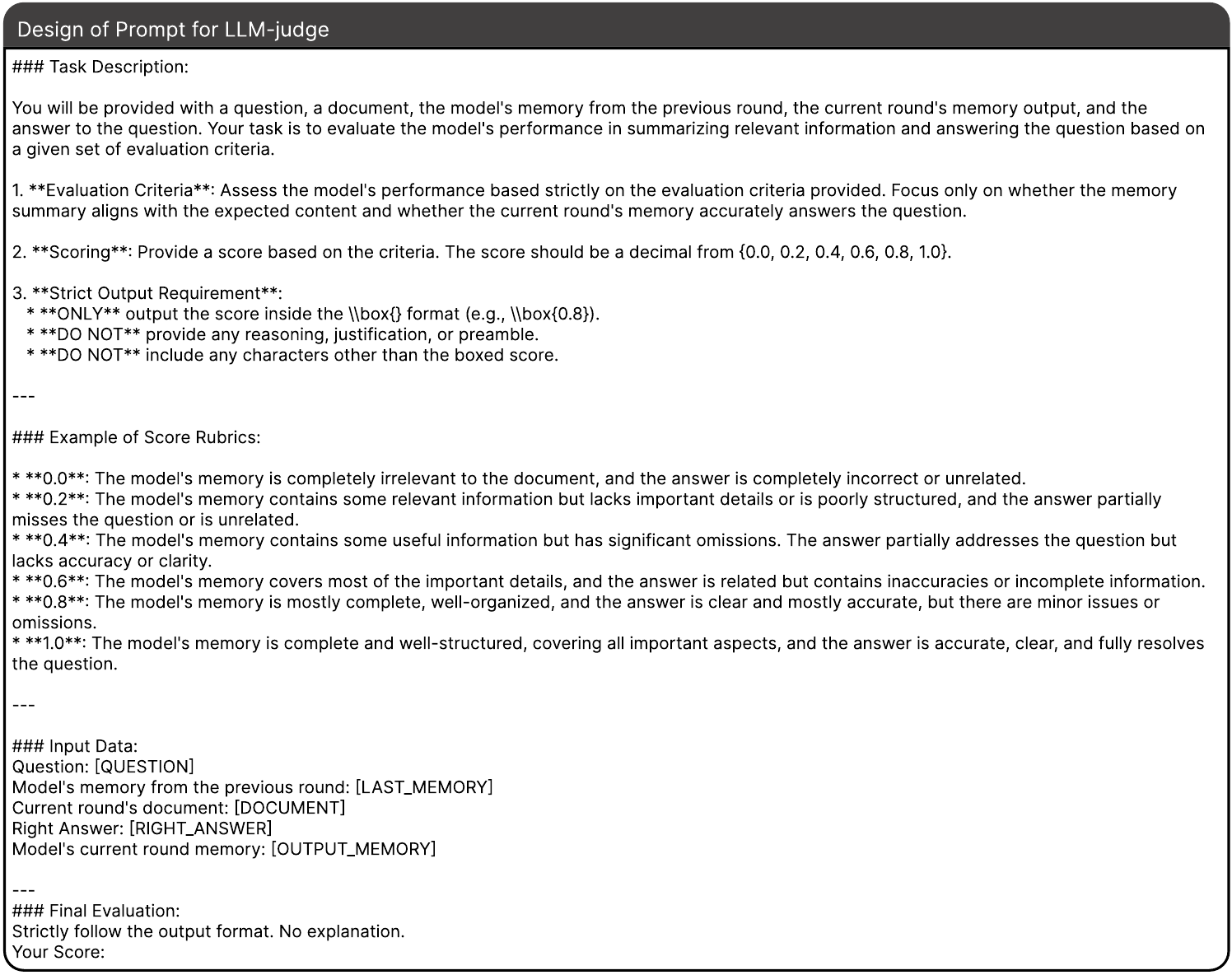}
    \caption{Design of Prompt for LLM-judge.}
    \label{fig:judge_prompt}
\end{figure*}

\section{Case Study}
\label{appendix:case_study}
We present output examples of the TAMTRL-0.6B and TAMTRL-1.7B models on the HotpotQA dataset in Figures~\ref{fig:06b_case1} and~\ref{fig:17b_case1}. The models effectively identify salient evidence through complex multi-turn interactions and produce correct answers, demonstrating robust long-context reasoning capability.

\begin{figure*}[htbp]
    \centering
    \includegraphics[width=1\textwidth]{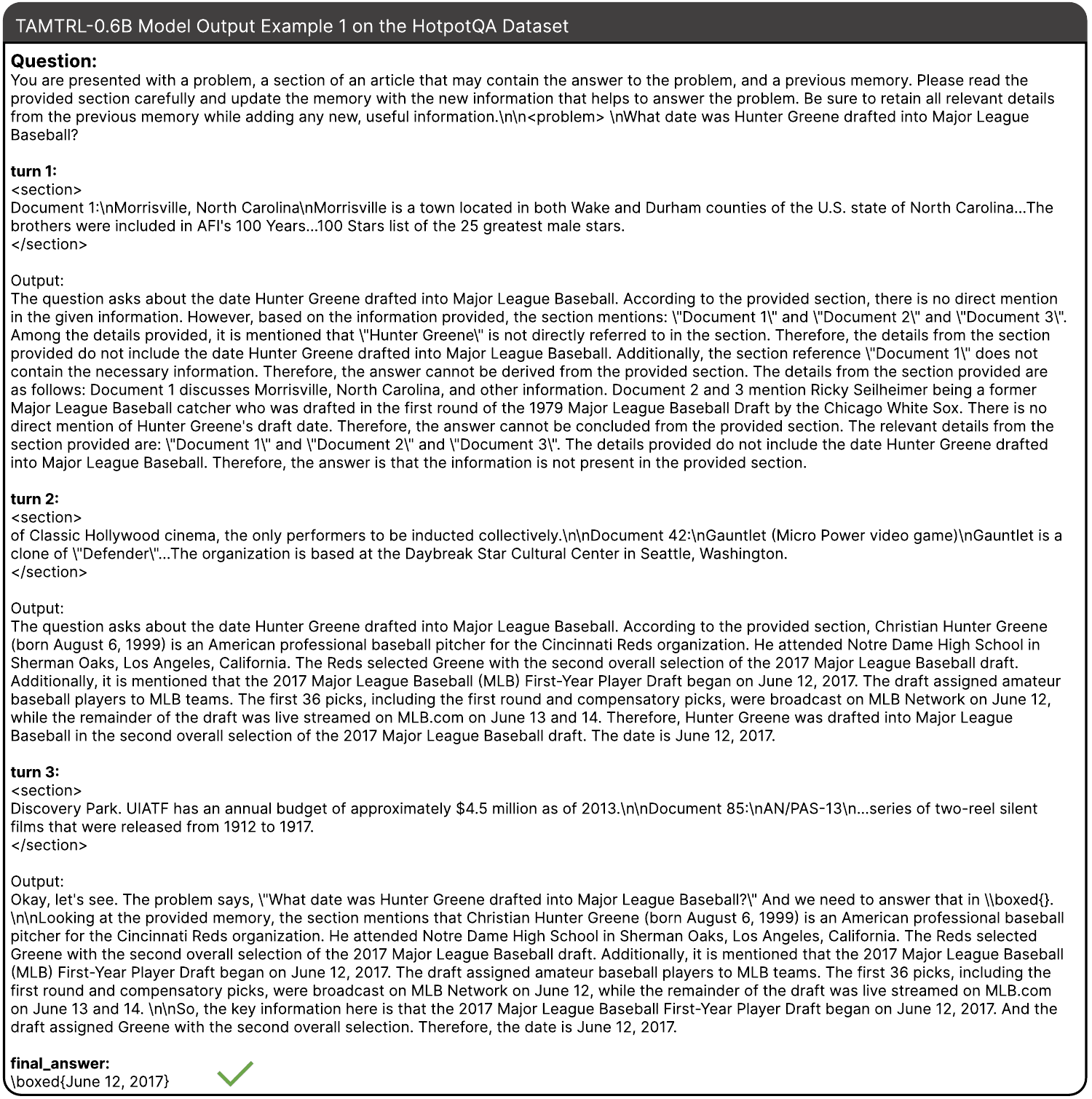}
    \caption{TAMTRL-0.6B model output example 1 on the hotpotQA dataset.}
    \label{fig:06b_case1}
\end{figure*}

\begin{figure*}[htbp]
    \centering
    \includegraphics[width=1\textwidth]{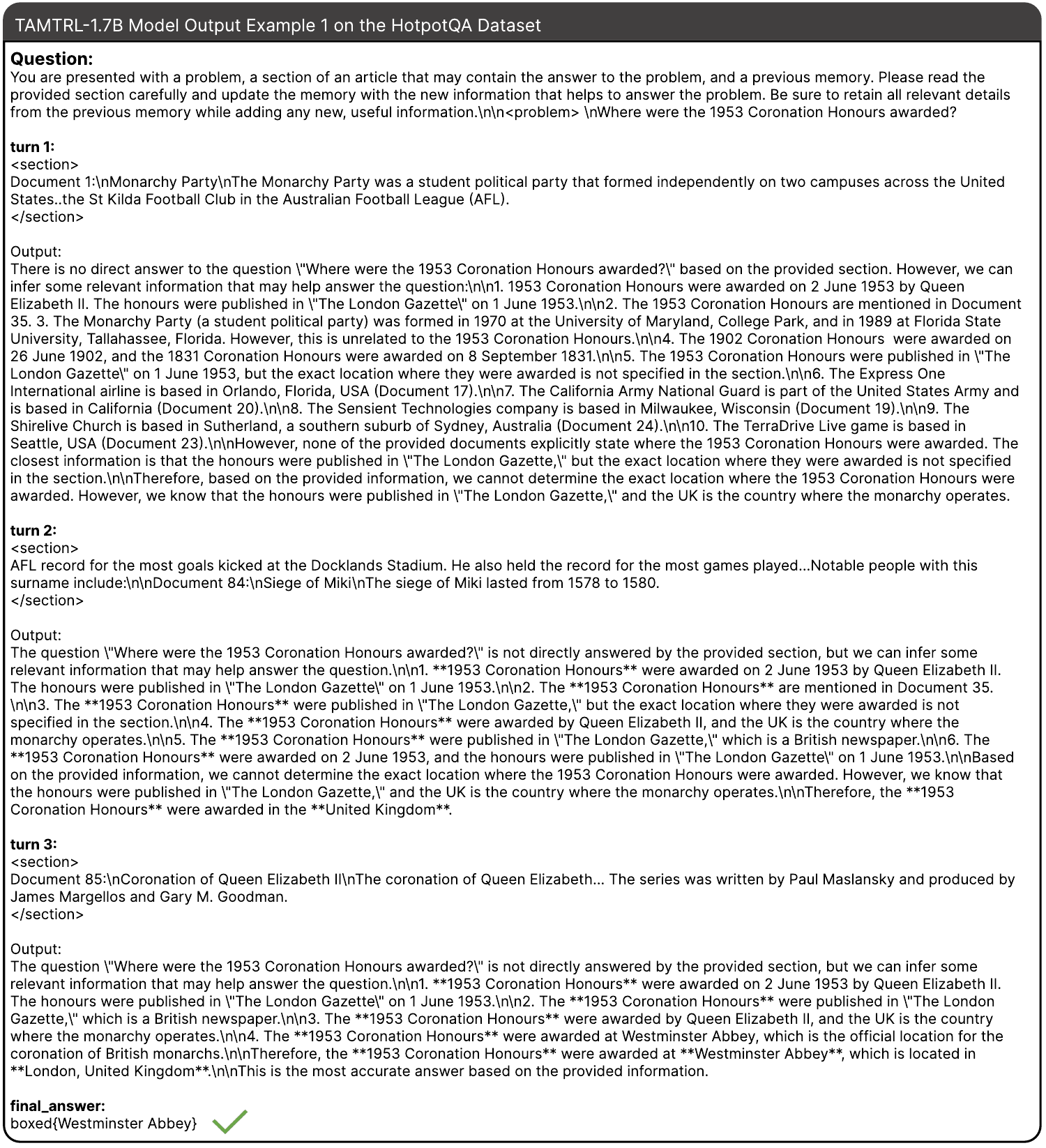}
    \caption{TAMTRL-1.7B model output example 1 on the hotpotQA dataset.}
    \label{fig:17b_case1}
\end{figure*}

\end{document}